\documentclass[11pt,letterpaper]{article}

\usepackage[T1]{fontenc}
\usepackage{lmodern}
\usepackage[margin=1in]{geometry}
\usepackage{amsmath,amssymb,amsthm}
\usepackage{microtype}
\usepackage{booktabs}
\usepackage[round,authoryear]{natbib}
\usepackage{xcolor}
\usepackage{adjustbox}
\usepackage{hyperref}
\hypersetup{
  colorlinks=true,
  linkcolor=blue,
  citecolor=blue,
  urlcolor=blue,
  pdftitle={Polylogarithmic Nash Regret in Matrix Games with Bandit Feedback},
  pdfauthor={Yuheng Zhang}
}
\newtheorem{theorem}{Theorem}
\newtheorem{lemma}[theorem]{Lemma}

\newcounter{algorithm}
\newenvironment{algobox}[1]{%
  \begin{figure}[!t]
  \refstepcounter{algorithm}%
  \begin{adjustbox}{max totalsize={\textwidth}{0.92\textheight},center}
  \begin{minipage}{\textwidth}
  \small
  \setlength{\abovedisplayskip}{3pt}
  \setlength{\belowdisplayskip}{3pt}
  \setlength{\abovedisplayshortskip}{3pt}
  \setlength{\belowdisplayshortskip}{3pt}
  \setlength{\parindent}{0pt}
  \hrule\smallskip
  \textbf{Algorithm \thealgorithm. #1}\par\smallskip
  \hrule\smallskip
}{%
  \par\smallskip\hrule
  \end{minipage}
  \end{adjustbox}
  \end{figure}
}

\title{Polylogarithmic Nash Regret in Matrix Games \\ with Bandit Feedback}

\author{Yuheng Zhang\\
University of Illinois Urbana-Champaign\\
\texttt{yuhengz2@illinois.edu}}
\date{}

\begin{document}

\maketitle

\begin{abstract}
We study Nash regret minimization in unknown finite matrix games with
bandit payoff feedback and observed opponent actions.
We develop Optimistic Payoff Balancing (OPB), which achieves
instance-dependent $\mathcal{O}(\log^2 T)$ Nash regret against arbitrary
adaptive opponents, including games with nonunique equilibria.
This resolves the open problem posed by \citet{maiti2025limitations},
extending their polylogarithmic guarantee under bandit feedback from
$2\times2$ games to arbitrary finite dimensions.
To handle nonunique equilibria, we construct a reference strategy that
leaves room for local adjustments.
We order independent payoff differences by estimation accuracy and scale
these adjustments by uncertainty, allowing the learner to exploit the
opponent's imbalance to offset estimation costs.
Our result thus shows that observing opponent actions suffices for
polylogarithmic Nash regret in general finite matrix games.
\end{abstract}

\section{Introduction}\label{sec:introduction}

We study Nash regret minimization in an unknown two-player zero-sum
matrix game against an adaptive opponent. The learner controls one
player and, after each interaction, observes the opponent's action and
a noisy payoff for the sampled action pair. The value $v(A)$ of the
payoff matrix $A$ is the expected reward that an equilibrium strategy
guarantees against every opponent. When the matrix is unknown, the
learner must balance learning its payoffs with earning reward against
the opponent. Nash regret measures the cumulative reward shortfall
relative to the game value \citep{o2021matrix,maiti2025limitations}:
\[
  R_T=Tv(A)-\mathbb E\!\left[\sum_{t=1}^T r_t\right],
\]
where $r_t$ is the observed reward at round $t$.
A small Nash regret therefore guarantees cumulative reward close to
the game's value throughout learning, even when the opponent follows
an arbitrary strategy.

\begin{table}[!t]
\centering
\small
\setlength{\tabcolsep}{3pt}
\renewcommand{\arraystretch}{1.15}
\caption{Nash regret guarantees against arbitrary opponents in unknown finite
matrix games.
\emph{Actions} means that the opponent's realized actions are observed;
\emph{bandit only} reveals only the sampled payoff.
NE denotes a Nash equilibrium. A pure NE is strict if every unilateral
deviation to another pure action strictly lowers the deviating player's payoff.
The notation $\mathcal O_A$ hides constants depending on the fixed game.
The bounds of \citet{ito2026adversarial} use the equality of the pure
maximin and Nash values in the listed cases.
The entries for \citet{maiti2025limitations} include the extensions to
nonunique equilibria discussed in Appendices~D.5 and~E.4 and report
the fixed-horizon rates.
Equilibria need not be unique unless strictness is specified.}
\label{tab:regret-comparison}
\begin{tabular}{@{}p{.27\linewidth}p{.23\linewidth}p{.25\linewidth}p{.21\linewidth}@{}}
\toprule
Work & Feedback & Games & Nash regret \\
\midrule
\citet{o2021matrix}
& Bandit + actions & Any $n\times m$
& $\widetilde{\mathcal O}(\sqrt{nmT})$ \\
\addlinespace
\citet{ito2026adversarial}
& Bandit only & $n\times m$, strict pure NE
& $\mathcal O_A(\log T)$ \\
\addlinespace
\citet{ito2026adversarial}
& Bandit + actions & $n\times m$, pure row optimum
& $\mathcal O_A(\log T)$ \\
\addlinespace
\citet{maiti2025limitations}
& Full matrix + actions & Any $n\times m$
& $\mathcal O_A(\log^2 T)$ \\
\addlinespace
\citet{maiti2025limitations}
& Bandit + actions & Any $2\times2$
& $\mathcal O_A(\log^2 T)$ \\
\midrule
\textbf{This work}
& Bandit + actions & \textbf{Any} $n\times m$
& $\mathcal O_A(\log^2 T)$ \\
\bottomrule
\end{tabular}
\end{table}

Prior work establishes several routes to low regret under different
assumptions on feedback and opponent behavior.
With bandit payoff feedback and observed opponent actions,
\citet{o2021matrix} analyze an optimistic matrix-game algorithm with
$\widetilde{\mathcal O}(\sqrt{nmT})$ Nash regret for an $n\times m$ game.
In self-play, where both players follow prescribed learning algorithms,
\citet{ito2025instance} obtain improved instance-dependent regret bounds,
including logarithmic regret when the Nash equilibrium is unique and pure.
Against arbitrary opponents, \citet{ito2026adversarial} obtain logarithmic
regret relative to the best worst-case payoff of a pure strategy when
opponent actions are observed. This benchmark can be smaller than the
Nash value, but the two coincide whenever the row player has a pure
optimal strategy, including games with a pure Nash equilibrium.
Their result therefore also gives logarithmic Nash regret for this class
of games.

Our work is most closely related to \citet{maiti2025limitations}.
They show that both playing an empirical equilibrium and using
matrix-game UCB can incur $\Omega(\sqrt T)$ Nash regret even on fixed
$2\times2$ instances.
They also develop algorithms achieving instance-dependent polylogarithmic
Nash regret, allowing nonunique equilibria.
With a noisy observation of the entire matrix at each round,
their guarantee applies to arbitrary $n\times m$ games.
Under bandit feedback, they establish this guarantee only for
$2\times2$ games, leaving the extension to arbitrary dimensions open.
The obstacle is that the learner controls only its own action:
the opponent can withhold some columns or reveal them much less often
than others, preventing uniform exploration of the matrix.
In higher dimensions, these unequal estimation errors must be handled
across several interacting strategy adjustments.

The observation of opponent actions is essential to this question.
Without it, an $\Omega(\sqrt T)$ lower bound holds even over a fixed
finite family of $2\times2$ games \citep[Remark~3]{ito2026adversarial}.
This leaves the following question:
\begin{quote}
\emph{Can a learner achieve instance-dependent polylogarithmic Nash regret
in arbitrary finite matrix games with bandit payoff feedback and observed
opponent actions?}
\end{quote}

\paragraph{Our results.}
We answer this question affirmatively, resolving the open problem of
\citet{maiti2025limitations}.
Our algorithm, Optimistic Payoff Balancing (OPB;
Algorithm~\ref{alg:main}), satisfies
\[
  R_T\le C(A)\log^2(eT)\qquad\text{for every }T\ge1
\]
for every fixed finite payoff matrix $A$, where $C(A)$ is finite and
depends only on the matrix, including its dimensions
(Theorem~\ref{thm:regret}).
The guarantee holds against adaptive opponents and allows nonunique
Nash equilibria for both players.
Moreover, OPB requires neither the horizon nor prior knowledge of
equilibrium supports or game-dependent separation parameters.
Thus, bandit payoff feedback with observed actions suffices for
polylogarithmic Nash regret in arbitrary finite dimensions.
Table~\ref{tab:regret-comparison} compares the settings and guarantees.

\paragraph{Technical ideas.}
We build on the use of opponent actions to adjust a strategy near an
estimated equilibrium \citep{maiti2025limitations}.
The challenge in higher dimensions is to handle unequal estimation
errors across columns while allowing nonunique equilibria.
We address nonuniqueness by maximizing a joint logarithmic objective
over row probabilities and column slacks. Once estimates are accurate
enough, the resulting reference gives positive probability to every row
used by an optimum and positive slack to every column whose payoff can
exceed the value at an optimum, leaving room for local adjustments.

Our central technique is to select independent payoff differences in
order of estimation accuracy and scale each coordinate's adjustment
range by its uncertainty. We also shrink estimated payoff coefficients
toward zero using confidence bounds, so a truly zero coefficient never
triggers an update. The selection order ensures that repeated visits
affecting a coordinate either bring its sample counts to the next
reconstruction threshold or create a persistent payoff imbalance.
The local updates exploit this imbalance to earn surplus reward.
Its negative regret contribution absorbs the accumulated estimation
cost, giving logarithmic regret per epoch and an overall
$\mathcal O_A(\log^2 T)$ bound.

\section{Preliminaries}\label{sec:preliminaries}

\paragraph{Notation.}
For a positive integer $k$, let $[k]=\{1,\ldots,k\}$ and
$\Delta_k=\{p\in\mathbb R_{\ge0}^k:\sum_{i=1}^k p_i=1\}$.
We write $p_i$ for the $i$th coordinate of a vector $p$ and $A_j$
for the $j$th column of a matrix $A$. All logarithms are natural.

\paragraph{Game and feedback model.}
We consider a two-player zero-sum game with a fixed unknown payoff matrix
$A\in[-1,1]^{n\times m}$. The row player has $n$ actions and maximizes
reward, while the column player has $m$ actions and minimizes it.
The entry $A_{ij}$ is the expected reward of the row player when the
players choose actions $i$ and $j$. For mixed strategies
$p\in\Delta_n$ and $q\in\Delta_m$, the expected payoff is $p^\top Aq$.
A pair $(p^\star,q^\star)\in\Delta_n\times\Delta_m$ is a Nash equilibrium
(NE) if neither player can improve its payoff by changing its own strategy:
\[
  p^\top Aq^\star
  \le (p^\star)^\top Aq^\star
  \le (p^\star)^\top Aq
  \qquad\text{for all }p\in\Delta_n,\ q\in\Delta_m.
\]
The value of the game is
\[
  v(A)=\max_{p\in\Delta_n}\min_{q\in\Delta_m}p^\top Aq
      =(p^\star)^\top Aq^\star.
\]
An equilibrium row strategy guarantees expected reward at least $v(A)$
against every opponent strategy. Nash equilibria need not be unique:
either player may have multiple equilibrium strategies.

The learner controls the row player and interacts with an adaptive
opponent under bandit payoff feedback with observed opponent actions.
At round $t$, the learner chooses a mixed strategy $p_t\in\Delta_n$
using the past observations. The opponent chooses a mixed strategy
$q_t\in\Delta_m$, possibly using the matrix $A$, the history, and $p_t$.
Conditionally on this history and the chosen strategies, the actions
$I_t\sim p_t$ and $J_t\sim q_t$ are sampled independently.
The learner then observes $(I_t,J_t,r_t)$, where $r_t\in[-1,1]$
is a noisy payoff with conditional mean $A_{I_tJ_t}$.
Specifically, writing $\mathcal F_t^-$ for the information available
after the mixed strategies are chosen and before the actions are drawn,
the reward is generated according to
\[
  \mathbb E[r_t\mid\mathcal F_t^-,I_t,J_t]=A_{I_tJ_t}.
\]
This feedback model allows the opponent to respond to the learner's
mixed strategy and allows the reward noise to depend on the history.

\paragraph{Nash regret.}
We measure performance by the expected Nash regret
\[
  R_T
  =Tv(A)-\mathbb E\!\left[\sum_{t=1}^T r_t\right]
  =\mathbb E\!\left[\sum_{t=1}^T
       \bigl(v(A)-p_t^\top Aq_t\bigr)\right].
\]
The expectation includes the randomness of the learner, the opponent,
and the rewards.

Under \emph{uninformed} bandit feedback, where the learner observes
the reward but not the opponent's action, an $\Omega(\sqrt{T})$ Nash
regret lower bound holds even over a fixed finite family of $2\times2$ games
\citep[Remark~3]{ito2026adversarial}.
We therefore ask whether observing the opponent's action, as in the
\emph{informed} feedback model above, allows instance-dependent
polylogarithmic Nash regret.

Our goal is a single algorithm that does not require the matrix $A$
or the time horizon and achieves, for every fixed $A$,
\[
  R_T\le C(A)\operatorname{polylog}(eT)
  \qquad\text{for every integer }T\ge1,
\]
where $C(A)$ is finite and depends only on the payoff matrix, including
its dimensions.

\section{Achieving Polylogarithmic Nash Regret}\label{sec:polylog}

Section~\ref{sec:design} develops the algorithm step by step, explaining
the challenges that motivate each component.
Section~\ref{sec:guarantee} states the regret guarantee, discusses its
implications, and gives a proof sketch.

\subsection{Algorithm Design}\label{sec:design}

Our algorithm, Optimistic Payoff Balancing (OPB), combines payoff
estimation with local strategy updates.
It periodically estimates the game and constructs a reference strategy
near its optimal set. Between these reconstructions, it uses each observed
opponent action to adjust the strategy. These adjustments correct
estimation bias and exploit persistent column imbalance to earn surplus
over the game value.

The main challenge is that the opponent controls which columns are
sampled, so different payoff estimates can have very different accuracies.
Holding an estimated equilibrium fixed can then incur the same
estimation bias repeatedly. We address this by expressing both payoff
uncertainty and strategy adjustments in a common set of payoff difference
coordinates. A further challenge is that equilibria can be nonunique:
we therefore construct an interior reference and select independent
payoff differences to define the local updates.

We first describe a run with a planned length $H$. The learner maintains
the number of observations $C_{ij}$ and empirical mean $\widehat A_{ij}$
of each entry. It keeps each local model fixed during an \emph{epoch},
while updating its strategy after each observed opponent action.
Each reconstruction incorporates the new observations and restarts the
local update. We now describe how to construct the model, choose its
coordinates, and update the strategy, before assembling the complete
procedure and removing the need to know the horizon.

\paragraph{Learning entries through optimistic completion.}
The learner cannot force the opponent to reveal a particular column.
We call an entry \emph{acquired} once it has been observed at least $h$ times.
We therefore assign the upper payoff bound $1$ to entries with fewer
than $h$ observations, and use empirical means for the remaining entries.
At the start of an epoch, we form the acquisition mask
$M=\{(i,j):C_{ij}\ge h\}$ of acquired entries and set
\[
  \widehat A^M_{ij}=
  \begin{cases}
    \widehat A_{ij},&(i,j)\in M,\\
    1,&(i,j)\notin M.
  \end{cases}
\]
This completion makes an unresolved entry attractive until enough samples
are collected. Its cost is controlled directly by those samples: each
entry is observed at most $h$ times before its empirical mean takes over,
so the total expected payoff overstatement is at most $2nmh$.
We can thus analyze learning in the completed game while paying a finite
acquisition cost for the original one.

Let $\varepsilon$ be the desired accuracy of acquired entries and let
$\tau$ distinguish features of the optimal set from estimation error.
We use
\[
  \ell=\log\bigl(64nm(H+1)^4\bigr),\quad
  \varepsilon=\ell^{-1/2},\quad
  h=\lceil2\ell^2\rceil,\quad
  \tau=\min\{\sqrt\varepsilon,1/(2n)\}.
\]
An entry with $c$ observations has confidence radius $\sqrt{2\ell/c}$.
Since $\sqrt{2\ell/h}\le\varepsilon$, $h$ observations give accuracy
at most $\varepsilon$, and the
acquisition cost is of order $\ell^2$. The larger threshold $\tau$
separates features that vanish with estimation error from fixed positive
features of the game. We reconstruct the model initially and whenever an
entry count reaches one of $h,2h,4h,\ldots$. Between reconstructions,
the confidence widths of acquired entries change by at most a constant
factor. All quantities defining the model below are frozen within an epoch.

\paragraph{Finding an interior reference when equilibria are nonunique.}
For the current mask $M$, let $A^M$ be the true completed game, with
$A^M_{ij}=A_{ij}$ on $M$ and $A^M_{ij}=1$ elsewhere, and write
$v_M=v(A^M)$. The optimal set consists of the strategies $x\in\Delta_n$
satisfying $(A^M)^\top x\ge v_M\mathbf1$.
We call column $j$ \emph{binding} at a strategy $x$
if its slack $x^\top A^M_j-v_M$ is zero.
Two optimal row strategies $x,y$ can satisfy
\[
  x_i=0<y_i
  \qquad\text{or}\qquad
  x^\top A^M_j=v_M<y^\top A^M_j.
\]
In the first case, choosing $x$ excludes a row that another optimum uses.
In the second, column $j$ is binding at $x$ but has positive slack at $y$.
An empirical equilibrium can lie near either type of boundary.
We therefore seek a reference in the optimal set whose row probability
is positive whenever $x_i>0$ is possible at an optimum, and whose
column slack is positive whenever $x^\top A^M_j-v_M>0$ is possible.
These positive probabilities and slacks provide room for local adjustments.

We write $\widehat v=v(\widehat A^M)$ and define the near-optimal region
\[
  Z=\{x\in\Delta_n:(\widehat A^M)^\top x
                 \ge(\widehat v-2\varepsilon)\mathbf1\}.
\]
When the empirical entry errors satisfy the confidence bounds above,
$|\widehat v-v_M|\le\varepsilon$. Every optimal strategy $x$ of $A^M$
therefore satisfies
\[
  x^\top\widehat A^M_j
  \ge x^\top A^M_j-\varepsilon
  \ge v_M-\varepsilon
  \ge\widehat v-2\varepsilon
  \qquad(j\in[m]),
\]
so $Z$ contains the entire true optimal set.
We define the nonnegative empirical slack
$s_j(x)=x^\top\widehat A^M_j-\widehat v+2\varepsilon$ on $Z$
and choose the joint center
\begin{equation}\label{eq:center}
  p^c\in\mathop{\rm argmax}_{x\in Z}
  \left\{\sum_{i=1}^n\log(\varepsilon+x_i)
       +\sum_{j=1}^m\log(\varepsilon+s_j(x))\right\}.
\end{equation}
The two sums favor positive row probabilities and positive slacks
together. The shift by $\varepsilon$ accommodates features that are
zero throughout the optimal set. We then select the rows and binding
columns indicated by the center:
\begin{equation}\label{eq:face}
  I=\{i:p_i^c>\tau\},\qquad
  J=\{j:s_j(p^c)\le\tau\},\qquad
  \bar p_i=\frac{p_i^c}{\sum_{k\in I}p_k^c}\quad(i\in I).
\end{equation}
Since $\max_i p_i^c\ge1/n>\tau$, the normalization defining $\bar p$
is always well defined.
For each fixed game, sufficiently accurate estimates identify the union
of optimal row supports and the columns binding at every optimum.
We refer to these columns as the game's binding columns, and to the
remaining columns as nonbinding.
The reference $\bar p$ then assigns a fixed positive probability to every
retained row and has positive surplus against every nonbinding column.
Both properties survive small local adjustments. Removing rows outside
$I$ is also important: assigning even a small probability to a
strictly suboptimal row can accumulate loss throughout the run.

\paragraph{Selecting independent payoff equalities.}
At an optimal strategy $p$, every true binding column gives payoff $v_M$.
Thus, for any two such columns $j$ and $j_0$,
\[
  p^\top A^M_j=p^\top A^M_{j_0}=v_M,
  \qquad\text{so}\qquad
  p^\top(A^M_j-A^M_{j_0})=0.
\]
Subtracting the reference column removes the unknown value $v_M$.
The algorithm uses the empirical equations
$p^\top(\widehat A^M_j-\widehat A^M_{j_0})=0$ for columns in $J$.
Some equations can follow from others, so we select a subset whose
left-hand sides can be adjusted independently while preserving total
probability. The next step uses these quantities to specify how the
strategy should change.

The estimates for different columns can have very different accuracies,
and estimation noise can make redundant equations appear independent.
We therefore process columns from most to least accurately estimated.
We add a column's equation only when a confidence test certifies that
it is independent of the equations already selected, even after allowing
for estimation error.

Let $d=|I|$, $\widehat B=\widehat A^M[I,:]$, and
$P_0=\mathrm{Id}_d-\mathbf1\mathbf1^\top/d$, the projection onto
the subspace $\{u\in\mathbb R^d:\mathbf1^\top u=0\}$ of changes
that preserve total probability. We call the entries $(i,j)\notin M$
\emph{artificial entries}: both $\widehat A^M$ and $A^M$ assign them
the fixed value $1$. They therefore have no estimation error in the
completed game. To reflect this, we define effective counts by
\[
  \widetilde C_{ij}=
  \begin{cases}C_{ij},&(i,j)\in M,\\+\infty,&(i,j)\notin M,\end{cases}
  \qquad c_j=\min_{i\in I}\widetilde C_{ij}.
\]
If $J$ is nonempty, we choose a reference column
$j_0\in\mathop{\rm argmax}_{j\in J}c_j$ and process the other columns
of $J$ in decreasing count order, breaking ties by index. Starting with
an empty list of columns, we tentatively append each candidate $j$.
For the resulting trial list $j_1,\ldots,j_k$, we define
\[
  \widehat D'
  =\bigl[\widehat B_{j_1}-\widehat B_{j_0},\ldots,
         \widehat B_{j_k}-\widehat B_{j_0}\bigr]
  \in\mathbb R^{d\times k}.
\]
Let $e'_l=\sqrt{2\ell/c_{j_l}}$ for $l\in[k]$ be the corresponding
error scales. If
$G'=P_0\widehat D'$ has full column rank, we compute
\begin{equation}\label{eq:certificate}
  R'=G'((G')^\top G')^{-1},\qquad
  \chi'=2\sum_l e'_l\|R'_{\cdot l}\|_1,
\end{equation}
and accept the new equation precisely when $\chi'\le1/2$.
We reject trials without full column rank.

The identities $(\widehat D')^\top R'=\mathrm{Id}_k$ and
$\mathbf1^\top R'=0$ show how $R'$ adjusts the selected payoff differences:
adding $R'u$ to a strategy changes these differences by $u\in\mathbb R^k$
without changing total probability. The quantity $\chi'$ measures how much
entry uncertainty is amplified by this adjustment. We call the test $\chi'\le1/2$ the independence
certificate: it ensures that a perturbation within
the confidence bounds cannot destroy independence. It rejects directions
created entirely by noise and eventually accepts every truly independent
equation. A column with $c_j=\infty$ is the all-ones vector on $I$;
its reference is also all-ones, so its zero difference is rejected.

We denote the accepted columns by $j_1,\ldots,j_b$, where $b\le d-1$,
and set
\[
  \begin{gathered}
    \widehat D_l=\widehat B_{j_l}-\widehat B_{j_0},\qquad
    \widehat D=[\widehat D_1,\ldots,\widehat D_b],\\
    e_l=\sqrt{2\ell/c_{j_l}},\qquad
    E=\operatorname{diag}(e_1,\ldots,e_b),\\
    \widehat R=(P_0\widehat D)(\widehat D^\top P_0\widehat D)^{-1}.
  \end{gathered}
\]
The selected errors satisfy $0<e_1\le\cdots\le e_b\le\varepsilon$.
If $J$ is empty or no equation is accepted, we set $b=0$ and play $\bar p$
throughout the epoch. The following coordinate operations apply when $b>0$.

\paragraph{Using uncertainty to set the adjustment range.}
Once the optimal row support and binding columns are identified,
a true optimum $p^\star$ supported on $I$, viewed as a vector in
$\mathbb R^I$, satisfies $|\widehat D_l^\top p^\star|\le2e_l$,
since the corresponding true
payoff difference is zero. We allow residuals
$\widehat D_l^\top p=4e_lz_l$ with $z_l\in[-1,1]$, so the optimum's
normalized residual obeys
\[
  |z_l^\star|
  =\frac{|\widehat D_l^\top p^\star|}{4e_l}
  \le\frac12.
\]
The factor four thus leaves at least $1/2$ of the coordinate range
on either side of $z_l^\star$ for responding to the opponent's play.

Under nonuniqueness, these residuals need not determine a unique strategy.
We choose the point nearest to the reference $\bar p$ with the prescribed
residuals and total mass one. Since
$\widehat D^\top\widehat R=\mathrm{Id}_b$ and
$\mathbf1^\top\widehat R=0$, it is
\[
  \widehat x=\bar p-\widehat R\widehat D^\top\bar p,
  \qquad x(z)=\widehat x+4\widehat R E z.
\]
The strategy $x(z)$ has the prescribed payoff differences and total mass one:
\[
  \widehat D^\top x(z)=4Ez,
  \qquad \mathbf1^\top x(z)=1.
\]
For sufficiently accurate estimates, every $z\in[-1,1]^b$ also gives
$x_i(z)>0$ for all $i\in I$, and the payoff against every nonbinding
column remains above $v_M$. With less accurate estimates, however,
$x(z)$ may have negative coordinates. We therefore play its Euclidean
projection onto
\[
  \Delta_I=\{x\in\mathbb R_{\ge0}^I:\textstyle\sum_{i\in I}x_i=1\}.
\]
Here and below, $\Pi_S$ denotes Euclidean projection onto $S$.

\paragraph{Shrinking payoff coefficients using confidence bounds.}
The empirical payoff against column $j$ is linear in these coordinates:
\[
  x(z)^\top\widehat B_j
  =\widehat x^\top\widehat B_j+4\sum_l e_l\widehat\alpha_{lj}z_l,
  \qquad \widehat\alpha_j=\widehat R^\top\widehat B_j.
\]
Thus, increasing $z_l$ by an amount $\Delta z_l$ changes the predicted
payoff against column $j$ by $4e_l\widehat\alpha_{lj}\Delta z_l$.
An error in $\widehat\alpha_{lj}$ can therefore misdirect the strategy
update whenever the opponent plays column $j$. Since the coefficients
are fixed within an epoch, the same error can influence many successive
updates. We use the entry confidence bounds to quantify uncertainty in
each coefficient and shrink its estimate toward zero before updating.

For $j\in J$, we set $\delta_j=\sqrt{2\ell/c_j}$, with
$\delta_j=0$ when $c_j=\infty$, and define
\begin{equation}\label{eq:conservative}
  \begin{gathered}
    \nu_l=\|\widehat R_{\cdot l}\|_1,\qquad
    b_{lj}=2\nu_l\left(\delta_j+2\sum_k e_k|\widehat\alpha_{kj}|\right),\\
    \widetilde\alpha_{lj}
    =\operatorname{sgn}(\widehat\alpha_{lj})
      (|\widehat\alpha_{lj}|-b_{lj})_+.
  \end{gathered}
\end{equation}
Here $(a)_+=\max\{a,0\}$ and $\operatorname{sgn}(0)=0$.
The term $\delta_j$ bounds the estimation error in $\widehat B_j$.
Each selected difference $\widehat D_k$ also has error at most $2e_k$
in each entry. In the linear combination
$\sum_k\widehat\alpha_{kj}\widehat D_k$, these errors are multiplied by
$|\widehat\alpha_{kj}|$, giving the bound
$2\sum_k e_k|\widehat\alpha_{kj}|$.
The factor $\nu_l$ translates these payoff errors into uncertainty
in the $l$th coefficient.
To state what this interval estimates, we write $B=A^M[I,:]$,
$D_k=B_{j_k}-B_{j_0}$, and $D=[D_1,\ldots,D_b]$.
For sufficiently accurate estimates, $I$ consists exactly of the rows
used by at least one optimal strategy, and $J$ consists exactly of the
columns binding at every optimum. The selected differences
$D_1,\ldots,D_b$ then express each column $B_j$, $j\in J$, as the
constant payoff $v_M\mathbf1$ plus a linear combination.
The coefficients $\alpha_j$ satisfy
\[
  B_j=v_M\mathbf1+D\alpha_j,
  \qquad
  |\widehat\alpha_{lj}-\alpha_{lj}|\le b_{lj}.
\]
We choose $\widetilde\alpha_{lj}$ as the point nearest zero in
$[\widehat\alpha_{lj}-b_{lj},\widehat\alpha_{lj}+b_{lj}]$.
If $|\widehat\alpha_{lj}|\le b_{lj}$, this gives
$\widetilde\alpha_{lj}=0$; otherwise, we keep the estimated sign and
reduce its magnitude by $b_{lj}$. In particular,
$\alpha_{lj}=0$ implies $\widetilde\alpha_{lj}=0$.
For $j\notin J$, we set $\widetilde\alpha_j=0$; the local neighborhood
already preserves positive surplus against these columns.

At the first round $t_0$ of each epoch, we initialize $z_{t_0}=0$.
After observing the opponent's action $J_t$, we perform projected gradient
ascent in the coordinates $z$:
\begin{equation}\label{eq:local-update}
  p_t=\Pi_{\Delta_I}\bigl(\widehat x+4\widehat R E z_t\bigr),
  \qquad
  z_{t+1}=\Pi_{[-1,1]^b}\bigl(z_t+4E\widetilde\alpha_{J_t}\bigr).
\end{equation}
We extend $p_t$ by zero outside $I$. This is the standard projected gradient
update \citep{zinkevich2003online}, applied to the conservative linear
payoff model. We use the observed opponent action $J_t$ to choose
$\widetilde\alpha_{J_t}$ for the strategy update from $z_t$ to $z_{t+1}$.
We use the reward $r_t$ to update the empirical mean
$\widehat A_{I_tJ_t}$ of the sampled matrix entry.
Both the adjustment range and the conservative gradient thus come from
the same entry confidence bounds.

\paragraph{Complete procedure.}
Algorithm~\ref{alg:main} assembles the four steps into epochs.
At the start of each epoch, we complete the empirical matrix and compute
the reference $\bar p$, select independent payoff equalities, and construct
the strategy map and corrected coefficients $\widetilde\alpha_j$.
We then keep this model fixed: each observed column $J_t$ determines
the update of $z$, while the reward $r_t$ updates the sampled entry's
count and empirical mean. When a count reaches $h,2h,4h,\ldots$,
we rebuild the model and reset $z=0$ before the next round, keeping
all accumulated observations.

To remove the need to know the horizon, we apply the doubling trick
to the logarithm of the planned run length:
\[
  H_0=2,\qquad H_{r+1}=H_r^2,
  \qquad\text{so}\qquad H_r=2^{2^r}.
\]
After $H_r$ rounds, we start a fresh run with length $H_{r+1}$,
resetting its counts and empirical means.
Since $\log H_{r+1}=2\log H_r$, the polylogarithmic regret bounds
of successive runs sum geometrically.

\begin{algobox}{Optimistic Payoff Balancing (OPB)}\label{alg:main}
\textbf{Input:} Numbers of actions $n,m$.
For runs $r=0,1,\ldots$, reset counts $C_{ij}$ and means
$\widehat A_{ij}$ to zero and set
\[
 H=2^{2^r},\quad \ell=\log(64nm(H+1)^4),\quad
 \varepsilon=\ell^{-1/2},\quad h=\lceil2\ell^2\rceil,\quad
 \tau=\min\{\sqrt\varepsilon,1/(2n)\}.
\]
Repeat the following epochs until the run has played $H$ rounds.
\begin{enumerate}
\setlength{\itemsep}{3pt}\setlength{\parsep}{0pt}
\item \textbf{Construct the reference.}
Set $M=\{(i,j):C_{ij}\ge h\}$ and fill $\widehat A^M$ with
$\widehat A_{ij}$ on $M$ and $1$ elsewhere.
Let $\widehat v$ be its game value,
$s_j(x)=x^\top\widehat A^M_j-\widehat v+2\varepsilon$, and
$Z=\{x\in\Delta_n:s_j(x)\ge0\ \forall j\}$. Compute
\[
 p^c\in\mathop{\rm argmax}_{x\in Z}
 \left\{\sum_i\log(\varepsilon+x_i)+\sum_j\log(\varepsilon+s_j(x))\right\}.
\]
Set $I=\{i:p_i^c>\tau\}$, $J=\{j:s_j(p^c)\le\tau\}$,
$\bar p_i=p_i^c/\sum_{k\in I}p_k^c$ for $i\in I$.

\item \textbf{Select independent equations.}
Set $d=|I|$, $\widehat B=\widehat A^M[I,:]$,
$P_0=\mathrm{Id}_d-\mathbf1\mathbf1^\top/d$, and
$c_j=\min\{C_{ij}:i\in I,\ (i,j)\in M\}$, with $\min\varnothing=\infty$.
For nonempty $J$, choose $j_0\in\arg\max_{j\in J}c_j$ and scan the
remaining columns in decreasing $c_j$, breaking ties by index.
Starting from an empty list, append each candidate tentatively.
For the trial list $j'_1,\ldots,j'_k$, form
\[
 \widehat D'=[\widehat B_{j'_l}-\widehat B_{j_0}]_{l=1}^k,\qquad
 G'=P_0\widehat D'.
\]
Reject the candidate if $\operatorname{rank}(G')<k$; otherwise compute
\[
 R'=G'((G')^\top G')^{-1},\qquad
 \chi'=2\sum_{l=1}^k\sqrt{2\ell/c_{j'_l}}\,\|R'_{\cdot l}\|_1,
\]
and accept it iff $\chi'\le1/2$.
Denote the accepted list by $j_1,\ldots,j_b$; set $b=0$ if $J=\varnothing$.

\item \textbf{Build the local update} (if $b>0$).
Set $\widehat D=[\widehat B_{j_l}-\widehat B_{j_0}]_{l=1}^b$ and
$e_l=\sqrt{2\ell/c_{j_l}}$, $E=\operatorname{diag}(e_1,\ldots,e_b)$.
Compute
\[
 \widehat R=(P_0\widehat D)(\widehat D^\top P_0\widehat D)^{-1},\qquad
 \widehat x=\bar p-\widehat R\widehat D^\top\bar p.
\]
For $j\in J$ and $l\in[b]$, compute
\[
 \begin{gathered}
 \widehat\alpha_j=\widehat R^\top\widehat B_j,\qquad
 b_{lj}=2\|\widehat R_{\cdot l}\|_1
 \left(\sqrt{2\ell/c_j}+2\sum_k e_k|\widehat\alpha_{kj}|\right),\\
 \widetilde\alpha_{lj}
 =\operatorname{sgn}(\widehat\alpha_{lj})(|\widehat\alpha_{lj}|-b_{lj})_+.
 \end{gathered}
\]
Set $\widetilde\alpha_j=0$ for $j\notin J$ and initialize $z=0$.

\item \textbf{Play until the next epoch.}
Keep the model fixed. If $b=0$, play $\bar p$; otherwise play
$p_I=\Pi_{\Delta_I}(\widehat x+4\widehat R E z)$.
Set $p_i=0$ outside $I$, draw $I_t\sim p$, and observe $J_t,r_t$.
For $b>0$, update
\[
 z\leftarrow\Pi_{[-1,1]^b}(z+4E\widetilde\alpha_{J_t}).
\]
Increase $C_{I_tJ_t}$ by one and update its empirical mean using $r_t$.
When this count reaches $h,2h,4h,\ldots$, start a new epoch on the next
round. End the run after $H$ rounds.
\end{enumerate}
Here $\Delta_I$ is the probability simplex on $I$, $\Pi$ is Euclidean
projection, $(a)_+=\max\{a,0\}$, and $\sqrt{2\ell/\infty}=0$.
\end{algobox}

\subsection{Regret Guarantee}\label{sec:guarantee}

In the following theorem, we prove that OPB achieves polylogarithmic Nash
regret for every fixed finite matrix game, including games with nonunique
equilibria.

\begin{theorem}[Polylogarithmic Nash regret]\label{thm:regret}
For every $n,m\ge1$ and every $A\in[-1,1]^{n\times m}$, there is a
finite constant $C(A)$ such that Algorithm~\ref{alg:main} satisfies
\[
  R_T\le C(A)\log^2(eT)\qquad\text{for every integer }T\ge1.
\]
The same constant applies to every opponent and reward process in the
feedback model of Section~\ref{sec:preliminaries}.
\end{theorem}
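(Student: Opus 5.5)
We reduce the theorem to a bound for a single run of planned length $H$ and then sum over runs. Concretely, we aim to show that a run with parameter $H$ incurs expected Nash regret at most $C_1(A)\,\ell^2$ over any prefix of its $H$ rounds, where $\ell=\log(64nm(H+1)^4)=\Theta(\log H)$. Given this, the rounds up to $T$ touch runs $r=0,\ldots,r^\star$ with $H_{r^\star-1}<T$, so $\log H_{r^\star}\le 2\log T+O(1)$. Since $\log H_r$ doubles with $r$, the per-run bounds $\ell_r^2$ form a geometric series dominated by the last term, which gives $R_T\le C(A)\log^2(eT)$. Each run resets its statistics, and the opponent may be arbitrary, so the single-run bound must hold conditionally on the history at the start of the run. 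All quantities below are uniform over opponents.

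\textbf{Single-run decomposition.} First we define a good event $\mathcal G$ on which every entry satisfies $|\widehat A_{ij}-A_{ij}|\le\sqrt{2\ell/C_{ij}}$ at all times in the run. Azuma–Hoeffding combined with a union bound over entries and counts gives $\Pr(\mathcal G^c)\le 1/H$, by the choice of $\ell$, so the failure event costs $O(1)$. On $\mathcal G$ we decompose the per-round regret as
\[
v(A)-p_t^\top Aq_t=\bigl(v(A)-v_M\bigr)+\bigl(v_M-p_t^\top A^Mq_t\bigr)+p_t^\top(A^M-A)q_t .
\]
Since $A^M\ge A$ entrywise, the first term is nonpositive. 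The third term, summed in expectation, is at most $2\,\mathbb E\sum_t\mathbf 1[(I_t,J_t)\notin M_t]\le 2nmh=O(\ell^2)$, because each entry is artificial only for its first $h$ observations. It then remains to bound $\sum_t(v_M-p_t^\top A^Mq_t)$ epoch by epoch. There are at most $nm\log_2 H=O(\ell)$ epochs, and the mask changes at most $nm$ times. It therefore suffices to show that each epoch contributes $O_A(\ell)$ regret in the completed game, up to a telescoping term.

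\textbf{Structural step.} Next we show that there is a threshold $\varepsilon_0(A)>0$ such that, once $\varepsilon\le\varepsilon_0(A)$ (i.e.\ $H$ is larger than a game-dependent constant; smaller runs contribute $O_A(1)$ total), the following hold on $\mathcal G$ for every mask. The log-barrier center \eqref{eq:center} correctly identifies $I$ as the union of optimal supports of $A^M$ and $J$ as the columns binding at every optimum. Moreover, $\bar p$ has coordinates bounded below on $I$ and positive surplus against nonbinding columns. This uses the fact that the optimal set is a polytope: relative-interior points have all "possibly positive" features of size at least some $\gamma(A^M)>0$. The center maximizes a strictly concave barrier over a perturbation of this polytope, so those features are of order $\gamma$, while features forced to zero are $O(\varepsilon/\tau)$-small. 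There are finitely many masks, so we take the minimum constant. We also verify that the certificate $\chi'\le1/2$ accepts only truly independent differences, and that $|\widehat\alpha_{lj}-\alpha_{lj}|\le b_{lj}$ by a perturbation bound on $\widehat R$ in which $\chi\le1/2$ keeps the Neumann series convergent. Finally, for $z\in[-1,1]^b$ the point $x(z)$ lies in the simplex and keeps surplus against nonbinding columns, so the projection is inactive.

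\textbf{Per-epoch potential argument (main obstacle).} Within an epoch, for $j\in J$ we write $p_t^\top B_j=v_M+\sum_l D_l^\top p_t\,\alpha_{lj}$. The true residuals $D_l^\top x(z)$ equal $4e_lz_l$ up to error $O(e_l\chi)$. Nonbinding columns give nonnegative regret. So the regret against $J_t$ is at most $-4\sum_le_lz_{t,l}\alpha_{lJ_t}$ plus error terms of order $\sum_l e_l|\alpha_{lJ_t}|\cdot e_l$-type quantities. The projected-gradient analysis in $z$ with step $4E\widetilde\alpha$, comparing against $z$ on the box boundary, yields
\[
\sum_t\Bigl(-\sum_l e_lz_{t,l}\widetilde\alpha_{lJ_t}\Bigr)\le -\max_{u\in[-1,1]^b}\sum_l e_lu_l S_l+O\Bigl(1+\sum_t\|E\widetilde\alpha_{J_t}\|^2\Bigr),\qquad S_l=\sum_t\widetilde\alpha_{lJ_t}.
\]
The negative term $-\sum_l e_l|S_l|$ is the surplus gained from opponent imbalance. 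The hard part is showing that the positive error terms, namely the gap between $\alpha$ and $\widetilde\alpha$ (at most $2b_{lj}$, of order $\delta_j+\sum_ke_k$) and the quadratic term $e_l^2$ per round, are absorbed by this surplus up to $O_A(\ell)$ per epoch. We would exploit the accuracy ordering: a column $j$ with $\widetilde\alpha_{lj}\ne0$ satisfies $c_j\le c_{j_l}$ roughly, so $\delta_j\gtrsim e_l$. Each round on which the opponent plays such a column increments some count $C_{ij}$ with $i\in I$. Because $\bar p_i$ is bounded below, this happens with constant probability, and the counts can grow by only a factor of $2$ before the epoch ends. Hence the number of such rounds is $O(c_j)=O(\ell/\delta_j^2)$, and the cost $\sum\delta_j\cdot e_l$ over them is $O_A(\ell)$. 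Rounds that are costly but do not advance counts must come from columns whose $\widetilde\alpha$ pushes $S_l$ persistently in one direction, and then $e_l|S_l|$ pays for them. Making this case split rigorous is where the sorting by $c_j$, the factor $4$ margin, and the $\chi\le1/2$ certificate are all used. We expect it to require an inductive argument over $l=1,\ldots,b$ in order of increasing $e_l$.
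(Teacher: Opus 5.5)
Your run-level reduction, completion-cost decomposition and structural step follow the paper. The per-epoch step you flag as the main obstacle, however, is not merely left open: the heuristic you sketch for it rests on a false premise. A column with $\widetilde\alpha_{lj}\ne0$ need not have $c_j\lesssim c_{j_l}$. The reference column $j_0$ has the largest count, and generically $\alpha_{lj_0}\ne0$.

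The missing fact is that the accepted list is exactly the \emph{true} greedy independent list in the algorithm's count order. So you also need the certificate to eventually accept every truly independent difference, not only to reject spurious ones. Then every binding column with $c_j>q_l:=c_{j_l}$ has its projected difference in the span of $D_1,\ldots,D_{l-1}$. Uniqueness of the representation $B_j=v(B)\mathbf1+D\alpha_j$ then forces $\alpha_{lj}=\alpha_{lj_0}$, a single common value for all such columns.

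Low-count columns get only $O_A(q_l)$ visits per epoch. This uses count doubling, $x_i(z)\ge\beta_A$, and a count-concentration bound such as $C_{ij}(t)\ge\frac12Q_{ij}(t)-\ell$ with $Q_{ij}(t)=\sum_{u\le t}p_{u,i}\mathbf1\{J_u=j\}$. Your event $\mathcal G$ omits this bound, although your ``constant probability'' step needs it. Meanwhile all high-count visits push $d_l=\sum_{j\in J}N_j\alpha_{lj}$ in one direction. This gives $N_l\le K_A(q_l+|d_l|)$, hence $V=\sum_le_l^2N_l\le K_A\ell+K_A\varepsilon U$ directly, with no induction over $l$. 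The coefficient-error cost on high-count columns, where $\delta_j<e_l$, is then paid via $2e_l\delta_j\le e_l^2+\delta_j^2$ and $\sum_j\delta_j^2N_j\le K_A\ell$, not via $\delta_j\gtrsim e_l$.

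Your per-round regret expression also misstates the error terms in two ways. Write $F=D^\top\widehat R-\mathrm{Id}_b$, so that $D_l^\top x(z)=D_l^\top\widehat x+4e_lz_l+4\sum_kF_{lk}e_kz_k$.

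\emph{The offset.} The term $D_l^\top\widehat x=(D_l-\widehat D_l)^\top\widehat x$ can be as large as $2e_l$, not $O(e_l\chi)$. It is the same order as the surplus scale and must be accounted for. The paper does this by comparing with a true optimum $p^*=\widehat x+4\widehat REz^*$ with $\|z^*\|_\infty\le\frac12$; this is what the factor $4$ is for.

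\emph{The cross terms.} This is the more serious error. Bounding them per visit via $|F_{lk}|\le2e_l\|\widehat R_{\cdot k}\|_1$ costs about $\sum_{l,k}e_le_kN_l$, which $V$ does not control when $e_k\gg e_l$. For instance, $N_l\approx q_l=2\ell/e_l^2$ with $d_l\approx0$ already gives about $\ell e_k/e_l$, which can be polynomial in $H$. The paper instead absorbs these terms into the moving comparator $u_l(t)=z_l^*+\sum_kF_{lk}\frac{e_k}{e_l}(z_k^*-z_{k,t})$. It satisfies $\|u(t)\|_\infty\le\frac34$ once $\chi\le\frac16$, and its variation is charged to changes of $z_k$. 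Since $\alpha_{kj}=0\Rightarrow\widetilde\alpha_{kj}=0$, $z_k$ moves only on rounds with $\alpha_{kJ_t}\ne0$, giving $\operatorname{TV}(u_l)\le K_AV$.

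Projected gradient against $u_l(t)+\frac14\operatorname{sgn}(d_l)$ then gives $\mathcal R_{\rm epoch}\le K_A\ell+K_AV-U$. Combined with the visit bound above, this closes the epoch argument.
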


Theorem~\ref{thm:regret} resolves the open problem posed by
\citet{maiti2025limitations}: whether instance-dependent polylogarithmic
Nash regret is achievable in arbitrary $n\times m$ matrix games under
bandit payoff feedback with observed opponent actions.
Their results establish polylogarithmic Nash regret for general matrices
with full-matrix feedback, but cover only $2\times2$ games under bandit
feedback. OPB closes this gap by achieving
$\mathcal{O}_A(\log^2 T)$ Nash regret in arbitrary finite dimensions
against adaptive opponents.

The guarantee allows nonunique Nash equilibria for both players.
Moreover, OPB needs no prior knowledge of the equilibrium supports or
game-dependent separation parameters.

\paragraph{Proof sketch.}
We analyze a run with planned length $H$ through the true completed
games $A^M$. Optimistic completion costs at most $2nmh$ in expected
regret. On a simultaneous confidence event, for sufficiently large $H$,
the center and certificate identify the optimal row supports, binding
columns, and independent payoff differences. Every local strategy then
assigns positive probability bounded away from zero to each selected row
and earns surplus against nonbinding columns.

Fix an epoch prefix and write $B=A^M[I,:]$.
For the true differences $D_l=B_{j_l}-B_{j_0}$, each binding column
has the representation $B_j=v(B)\mathbf1+D\alpha_j$.
Let $N_l$ count visits with $J_t\in J$ and $\alpha_{lJ_t}\ne0$, and set
\[
  d_l=\sum_{t:J_t\in J}\alpha_{lJ_t},\qquad
  V=\sum_l e_l^2N_l,\qquad U=\sum_l e_l|d_l|.
\]
Here $V$ measures estimation cost and $U$ measures the available payoff
advantage. The count ordering ensures that columns with
$c_j>q_l:=c_{j_l}$ share the coefficient $\alpha_{l j_0}$, while
the remaining binding columns receive only $\mathcal O_A(q_l)$ visits
before reconstruction. Thus
\[
  N_l\le K_A(q_l+|d_l|),\qquad
  V\le K_A\ell+K_A\varepsilon U,
\]
where $K_A$ depends only on $A$ and may increase between displays.

For the regret bound, we construct a true optimum $x(z^*)$ with
$\|z^*\|_\infty\le1/2$ and adjust its coordinates to obtain a moving
comparison point $u(t)$ that corrects for the empirical payoff
differences. We show that $\|u(t)\|_\infty\le3/4$ and that each
coordinate has total variation at most $K_AV$.
The implication $\alpha_{lj}=0\Rightarrow\widetilde\alpha_{lj}=0$
is essential here: coordinate $l$ changes only on its $N_l$ counted
visits. We compare projected gradient ascent with
$u_l(t)+\frac14\operatorname{sgn}(d_l)\in[-1,1]$.
The shift contributes the advantage $U$, while comparator variation and
coefficient errors cost at most $K_A(\ell+V)$. Consequently,
\[
  \mathcal R_{\rm epoch}
  :=\sum_t\bigl(v(B)-p_t^\top B_{J_t}\bigr)
  \le K_A\ell+K_AV-U
  \le K_A\ell+(K_A\varepsilon-1)U,
\]
where the sum is over the epoch prefix and $p_t$ is restricted to $I$.
For sufficiently large $H$, $K_A\varepsilon\le1$, giving
$\mathcal R_{\rm epoch}\le K_A\ell$.
When $b=0$, the epoch deficit is already nonpositive.

There are $\mathcal O_{n,m}(1+\log H)$ epochs.
Since $\ell=\mathcal O_{n,m}(\log(eH))$ and $h=\mathcal O(\ell^2)$,
adding acquisition and confidence failure costs gives
$\mathcal O_A(\log^2(eH))$ regret for every run prefix.
Smaller horizons are absorbed into the instance-dependent constant.
Finally, $\log H_r=2^r\log2$ grows geometrically, so summing the
conditional run bounds gives $\mathcal O_A(\log^2(eT))$ regret.
Appendix~\ref{app:analysis} provides the complete proof.

\section{Related Work}\label{sec:related-work}

\paragraph{Regret minimization in matrix games.}
Regret minimization in repeated games has a long history, from
approachability and consistent play to adaptive algorithms and regret
matching
\citep{blackwell1956analog,hannan1957approximation,freund1999adaptive,foster1999regret,hart2000simple}.
Under bandit feedback, no-regret algorithms compete with the best fixed
action using only the rewards of the actions played
\citep{auer2002nonstochastic,neu2015explore}.
A related direction seeks algorithms that adapt to favorable problem
structure while preserving worst-case guarantees, leading to
instance-dependent bounds in bandits and self-play games
\citep{zimmert2021tsallis,ito2025instance}.
For learning against arbitrary opponents, recent work exploits the fixed
payoff matrix and studies regret relative to the Nash value or the pure
maximin value \citep{o2021matrix,maiti2025limitations,ito2026adversarial}.
These benchmarks distinguish the payoff guaranteed by a mixed strategy
from that guaranteed by a pure action.
Our result addresses the Nash benchmark in arbitrary finite matrix games;
Section~\ref{sec:introduction} and Table~\ref{tab:regret-comparison}
compare the feedback models and guarantees.

\paragraph{Equilibrium learning with bandit feedback.}
Another line of work studies equilibrium learning when players observe
only their own actions and realized payoffs.
Early approaches include reinforcement learning and regret testing,
with asymptotic guarantees under different equilibrium concepts and
game assumptions
\citep{hart2001reinforcement,leslie2005individual,foster2006regret,germano2007global}.
Subsequent work develops finite-sample guarantees for learning from
payoff observations in matrix, Markov, polymatrix, and monotone games
\citep{cai2023uncoupled,chen2024decentralized,faizal2024finite,dong2026uncoupled}.
One direction studies last-iterate convergence, including achievable
rates, computational efficiency, and the role of observed opponent actions
\citep{cai2026average,fiegel2026harder,maiti2026efficient,fiegel2026optimal,hait2026near}.
These works study equilibrium learning when all players follow specified
algorithms, whereas we study the cumulative payoff of one learner facing
an arbitrary opponent.

\section{Conclusion}\label{sec:conclusion}

We study Nash regret minimization in unknown finite matrix games with
bandit payoff feedback and observed opponent actions.
Our algorithm, OPB, achieves instance-dependent
$\mathcal{O}_A(\log^2 T)$ Nash regret against arbitrary adaptive opponents,
including games with nonunique Nash equilibria.
This resolves the open problem of \citet{maiti2025limitations} on
polylogarithmic Nash regret under bandit feedback in arbitrary dimensions.
Our construction jointly centers row probabilities and column slacks
to leave room for local strategy adjustments even when equilibria are
nonunique.
By selecting independent payoff differences in order of estimation
accuracy and scaling the adjustments by their uncertainty, OPB exploits
imbalances in the opponent's play to earn surplus reward that absorbs
the accumulated estimation cost.
Together, these ideas establish that observing opponent actions suffices
for polylogarithmic Nash regret in general finite matrix games.

\subsection*{AI use statement}
We use GPT-6 Astra to polish the writing, assist with calculations
in the proofs, and check their correctness. We review all AI-assisted
content and take full responsibility for the final content of this paper.

\bibliography{ref}

@article{cai2023uncoupled,
  title={Uncoupled and convergent learning in two-player zero-sum {Markov} games with bandit feedback},
  author={Cai, Yang and Luo, Haipeng and Wei, Chen-Yu and Zheng, Weiqiang},
  journal={Advances in Neural Information Processing Systems},
  volume={36},
  pages={36364--36406},
  year={2023}
}

@article{cai2026average,
  title={From average-iterate to last-iterate convergence in games: A reduction and its applications},
  author={Cai, Yang and Luo, Haipeng and Wei, Chen-Yu and Zheng, Weiqiang},
  journal={Advances in Neural Information Processing Systems},
  volume={38},
  pages={46937--46967},
  year={2025}
}

@article{chen2024decentralized,
  title={Decentralized Best-Response-Based Learning in Two-Player Zero-Sum Stochastic Games: A Finite-Sample Analysis},
  author={Chen, Zaiwei and Zhang, Kaiqing and Mazumdar, Eric and Ozdaglar, Asuman and Wierman, Adam},
  journal={arXiv preprint arXiv:2409.01447},
  year={2024}
}

@article{dong2026uncoupled,
  title={Uncoupled and convergent learning in monotone games under bandit feedback},
  author={Dong, Jing and Wang, Baoxiang and Yu, Yaoliang},
  journal={Advances in Neural Information Processing Systems},
  volume={38},
  pages={151665--151683},
  year={2025}
}

@article{faizal2024finite,
  title={Finite-Sample Guarantees for Learning Dynamics in Zero-Sum Polymatrix Games},
  author={Faizal, Fathima Zarin and Ozdaglar, Asuman and Wainwright, Martin J},
  journal={arXiv preprint arXiv:2407.20128},
  year={2024}
}

@inproceedings{fiegel2026harder,
  title={The Harder Path: Last Iterate Convergence for Uncoupled Learning in Zero-Sum Games with Bandit Feedback},
  author={Fiegel, C{\^o}me and Menard, Pierre and Kozuno, Tadashi and Valko, Michal and Perchet, Vianney},
  booktitle={Proceedings of the 42nd International Conference on Machine Learning},
  pages={17131--17152},
  volume={267},
  series={Proceedings of Machine Learning Research},
  publisher={PMLR},
  year={2025},
  url={https://proceedings.mlr.press/v267/fiegel25a.html}
}

@article{fiegel2026optimal,
  title={Optimal last-iterate convergence in matrix games with bandit feedback using the log-barrier},
  author={Fiegel, C{\^o}me and Menard, Pierre and Kozuno, Tadashi and Valko, Michal and Perchet, Vianney},
  journal={arXiv preprint arXiv:2604.15242},
  year={2026}
}

@article{freund1999adaptive,
  title={Adaptive game playing using multiplicative weights},
  author={Freund, Yoav and Schapire, Robert E},
  journal={Games and Economic Behavior},
  volume={29},
  number={1-2},
  pages={79--103},
  year={1999},
  publisher={Elsevier}
}

@article{hait2026near,
  title={Near-Optimal Last-Iterate Convergence for Zero-Sum Games with Bandit Feedback and Opponent Actions},
  author={Hait, Soumita and Li, Ping and Luo, Haipeng and Zhang, Mengxiao},
  journal={arXiv preprint arXiv:2605.09363},
  year={2026}
}

@inproceedings{ito2025instance,
  title={Instance-Dependent Regret Bounds for Learning Two-Player Zero-Sum Games with Bandit Feedback},
  author={Ito, Shinji and Luo, Haipeng and Tsuchiya, Taira and Wu, Yue},
  booktitle={Proceedings of Thirty Eighth Conference on Learning Theory},
  pages={2858--2892},
  volume={291},
  series={Proceedings of Machine Learning Research},
  publisher={PMLR},
  year={2025},
  url={https://proceedings.mlr.press/v291/ito25a.html}
}

@inproceedings{maiti2026efficient,
  title={Efficient Uncoupled Learning Dynamics with {$\tilde{O}(T^{-1/4})$} Last-Iterate Convergence in Bilinear Saddle-Point Problems over Convex Sets under Bandit Feedback},
  author={Maiti, Arnab and Zhang, Claire Jie and Jamieson, Kevin and Morgenstern, Jamie Heather and Panageas, Ioannis and Ratliff, Lillian J.},
  booktitle={Proceedings of The 29th International Conference on Artificial Intelligence and Statistics},
  pages={2431--2439},
  year={2026},
  volume={300},
  series={Proceedings of Machine Learning Research},
  publisher={PMLR}
}

@article{neu2015explore,
  title={Explore no more: Improved high-probability regret bounds for non-stochastic bandits},
  author={Neu, Gergely},
  journal={Advances in Neural Information Processing Systems},
  volume={28},
  year={2015}
}

@inproceedings{o2021matrix,
  title={Matrix games with bandit feedback},
  author={O'Donoghue, Brendan and Lattimore, Tor and Osband, Ian},
  booktitle={Uncertainty in Artificial Intelligence},
  pages={279--289},
  year={2021},
  organization={PMLR}
}

@article{ito2026adversarial,
  title={Adversarial Learning in Games with Bandit Feedback: Logarithmic Pure-Strategy Maximin Regret},
  author={Ito, Shinji and Luo, Haipeng and Maiti, Arnab and Tsuchiya, Taira and Wu, Yue},
  journal={arXiv preprint arXiv:2602.06348},
  year={2026},
  url={https://arxiv.org/abs/2602.06348v1}
}

@article{maiti2025limitations,
  title={On the Limitations and Possibilities of {Nash} Regret Minimization in Zero-Sum Matrix Games under Noisy Feedback},
  author={Maiti, Arnab and Jamieson, Kevin and Ratliff, Lillian J.},
  journal={arXiv preprint arXiv:2306.13233},
  year={2025},
  url={https://arxiv.org/abs/2306.13233v3}
}

@techreport{zinkevich2003online,
  title={Online Convex Programming and Generalized Infinitesimal Gradient Ascent},
  author={Zinkevich, Martin},
  institution={Carnegie Mellon University},
  number={CMU-CS-03-110},
  year={2003},
  url={https://www.cs.cmu.edu/~maz/publications/techconvex.pdf}
}

@article{hoffman1952approximate,
  title={On Approximate Solutions of Systems of Linear Inequalities},
  author={Hoffman, Alan J.},
  journal={Journal of Research of the National Bureau of Standards},
  volume={49},
  number={4},
  pages={263--265},
  year={1952},
  doi={10.6028/jres.049.027}
}

@article{auer2002nonstochastic,
  title={The Nonstochastic Multiarmed Bandit Problem},
  author={Auer, Peter and Cesa-Bianchi, Nicol{\`o} and Freund, Yoav and Schapire, Robert E.},
  journal={SIAM Journal on Computing},
  volume={32},
  number={1},
  pages={48--77},
  year={2002},
  doi={10.1137/S0097539701398375}
}

@article{zimmert2021tsallis,
  title={{Tsallis-INF}: An Optimal Algorithm for Stochastic and Adversarial Bandits},
  author={Zimmert, Julian and Seldin, Yevgeny},
  journal={Journal of Machine Learning Research},
  volume={22},
  number={28},
  pages={1--49},
  year={2021},
  url={https://jmlr.org/papers/v22/19-753.html}
}

@article{blackwell1956analog,
  title={An Analog of the Minimax Theorem for Vector Payoffs},
  author={Blackwell, David},
  journal={Pacific Journal of Mathematics},
  volume={6},
  number={1},
  pages={1--8},
  year={1956},
  doi={10.2140/pjm.1956.6.1}
}

@incollection{hannan1957approximation,
  title={Approximation to {Bayes} Risk in Repeated Play},
  author={Hannan, James},
  booktitle={Contributions to the Theory of Games III},
  series={Annals of Mathematics Studies},
  volume={39},
  pages={97--139},
  publisher={Princeton University Press},
  year={1957}
}

@article{foster1999regret,
  title={Regret in the On-Line Decision Problem},
  author={Foster, Dean P. and Vohra, Rakesh V.},
  journal={Games and Economic Behavior},
  volume={29},
  number={1--2},
  pages={7--35},
  year={1999},
  doi={10.1006/game.1999.0740}
}

@article{hart2000simple,
  title={A Simple Adaptive Procedure Leading to Correlated Equilibrium},
  author={Hart, Sergiu and Mas-Colell, Andreu},
  journal={Econometrica},
  volume={68},
  number={5},
  pages={1127--1150},
  year={2000},
  doi={10.1111/1468-0262.00153}
}

@incollection{hart2001reinforcement,
  title={A Reinforcement Procedure Leading to Correlated Equilibrium},
  author={Hart, Sergiu and Mas-Colell, Andreu},
  booktitle={Economics Essays: A Festschrift for Werner Hildenbrand},
  editor={Debreu, G{\'e}rard and Neuefeind, Wilhelm and Trockel, Walter},
  pages={181--200},
  publisher={Springer},
  year={2001}
}

@article{leslie2005individual,
  title={Individual {Q}-Learning in Normal Form Games},
  author={Leslie, D. S. and Collins, E. J.},
  journal={SIAM Journal on Control and Optimization},
  volume={44},
  number={2},
  pages={495--514},
  year={2005},
  doi={10.1137/S0363012903437976}
}

@article{foster2006regret,
  title={Regret Testing: Learning to Play {Nash} Equilibrium without Knowing You Have an Opponent},
  author={Foster, Dean P. and Young, H. Peyton},
  journal={Theoretical Economics},
  volume={1},
  number={3},
  pages={341--367},
  year={2006},
  url={https://www.econtheory.org/ojs/index.php/te/article/viewArticle/20060341/0}
}

@article{germano2007global,
  title={Global {Nash} Convergence of {Foster} and {Young}'s Regret Testing},
  author={Germano, Fabrizio and Lugosi, G{\'a}bor},
  journal={Games and Economic Behavior},
  volume={60},
  number={1},
  pages={135--154},
  year={2007},
  doi={10.1016/j.geb.2006.06.001}
}
\bibliographystyle{plainnat}

\clearpage
\appendix
\section{Proof of Theorem~\ref{thm:regret}}\label{app:analysis}

We first analyze a run with planned length $H$, conditional on any history
at its start. Lemma~\ref{lem:completion-sampling} controls the cost of
optimistic completion and gives simultaneous sampling bounds.
Lemmas~\ref{lem:local-geometry} and~\ref{lem:coefficient-bounds} then show
that the estimated model recovers the relevant rows and columns and
provides valid coefficient intervals.
The main regret argument consists of two complementary bounds:
Lemma~\ref{lem:visit-counts} controls the accumulated estimation cost,
while Lemma~\ref{lem:epoch-regret} gives a negative payoff contribution
that absorbs this cost. We combine these lemmas in the final proof of
Theorem~\ref{thm:regret}.

Throughout the appendix, $t$ is the round index within a run, and
$\ell,\varepsilon,h,\tau$ are its algorithm parameters.
The symbol $K_A$ denotes a finite positive constant depending only on
$A$, which may increase between displays. Constants are uniform over
the finitely many completed matrices, row sets, reference columns, and
ordered lists of selected columns. Every threshold $H_A$ below is finite
and depends only on $A$; we enlarge it when necessary.

\subsection{Optimistic completion and sampling bounds}\label{app:concentration}

For a mask $M\subseteq[n]\times[m]$, recall the true completed matrix
\[
  A^M_{ij}=
  \begin{cases}A_{ij},&(i,j)\in M,\\1,&(i,j)\notin M.\end{cases}
\]
Let $C_{ij}(t)$ count observations through round $t$ and let
$\widehat A_{ij}(c)$ be the empirical mean of the first $c$ observations
of entry $(i,j)$. We also define the cumulative sampling probability
\[
  Q_{ij}(t)=\sum_{u=1}^t p_{u,i}\mathbf1\{J_u=j\}.
\]
The following lemma separates the cost of entries that have not yet
been acquired from the regret in the completed games.

\begin{lemma}[Completion cost and sampling bounds]\label{lem:completion-sampling}
For every prefix $s\le H$, let
\[
  M_t=\{(i,j):C_{ij}(t-1)\ge h\},\qquad
  \mathcal D_s=\sum_{t=1}^s
       \bigl(v(A^{M_t})-p_t^\top A^{M_t}_{J_t}\bigr).
\]
The following statements hold.
\begin{enumerate}
\item[(i)] The expected Nash regret satisfies
\begin{equation}\label{eq:completion-comparison}
  s\,v(A)-\mathbb E\sum_{t=1}^s r_t
  \le\mathbb E\mathcal D_s+2nmh.
\end{equation}
\item[(ii)] There is an event $\mathcal G$ with
$\mathbb P(\mathcal G^c)\le nm(2H+1)e^{-\ell}\le H^{-2}$
on which
\begin{equation}\label{eq:confidence-event}
  |\widehat A_{ij}(c)-A_{ij}|\le\sqrt{2\ell/c},
  \qquad C_{ij}(t)\ge\tfrac12Q_{ij}(t)-\ell
\end{equation}
simultaneously for every entry, every positive observation count $c$
attained by round $H$, and every $t\le H$.
\item[(iii)] The number of epochs intersecting any prefix is at most
\[
  1+nm\max\{0,1+\lfloor\log_2(H/h)\rfloor\}
  =\mathcal O_{n,m}(1+\log H).
\]
\end{enumerate}
\end{lemma}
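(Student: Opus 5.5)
For (i), I will compare each round's payoff against the completed game. Write
\[
  s\,v(A)-\mathbb E\sum_t r_t=\mathbb E\sum_t\bigl(v(A)-p_t^\top A_{J_t}\bigr),
\]
using the tower rule. The first step is to show that $r_t$ has conditional mean $p_t^\top A q_t$, and that this equals the conditional mean of $p_t^\top A_{J_t}$ given $\mathcal F_t^-$. Then I will use $A\le A^{M}$ entrywise, which gives $v(A)\le v(A^{M_t})$. Combining these,
\[
  v(A)-p_t^\top A_{J_t}\le v(A^{M_t})-p_t^\top A^{M_t}_{J_t}+p_t^\top\bigl(A^{M_t}_{J_t}-A_{J_t}\bigr).
\]
The last term equals $\sum_i p_{t,i}(1-A_{iJ_t})\mathbf 1\{(i,J_t)\notin M_t\}$, which is at most $2\sum_i p_{t,i}\mathbf1\{(i,J_t)\notin M_t\}$. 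In expectation this is $2\,\mathbb P(I_t=i,J_t=j,C_{ij}(t-1)<h)$ summed over entries. For each entry $(i,j)$, the events $\{I_t=i,J_t=j,C_{ij}(t-1)<h\}$ can occur at most $h$ times in total, since each occurrence increments $C_{ij}$. Summing over the $nm$ entries gives the bound $2nmh$.

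For (ii), I will handle the two inequalities separately and then take a union bound.

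\textbf{Entry concentration.} For each entry, the successive observed rewards form a martingale difference sequence in $[-1,1]$ around $A_{ij}$; I would use an optional-skipping argument to justify this. Azuma–Hoeffding then gives, for each fixed count $c\le H$,
\[
  \mathbb P\bigl(|\widehat A_{ij}(c)-A_{ij}|>\sqrt{2\ell/c}\bigr)\le 2e^{-\ell}.
\]
A union bound over $c\le H$ contributes $2H$ failure events per entry.

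\textbf{Count lower bound.} I will use the martingale $C_{ij}(t)-Q_{ij}(t)$, whose increments $\mathbf1\{I_t=i,J_t=j\}-p_{t,i}\mathbf1\{J_t=j\}$ have zero conditional mean given $\mathcal F_t^-$ and $J_t$. The standard exponential supermartingale for Bernoulli-type sums is $\exp(-\lambda C+(1-e^{-\lambda})Q)$; with $\lambda=\log 2$ it becomes $\exp\bigl(-C\log2+Q/2\bigr)$. Ville's maximal inequality then gives
\[
  \mathbb P\bigl(\exists t:\ C_{ij}(t)\log 2< Q_{ij}(t)/2-\ell\bigr)\le e^{-\ell}.
\]
Since $\log 2<1$, the inequality $C\log 2\ge Q/2-\ell$ implies $C\ge Q/2-\ell$ provided $Q/2-\ell\ge0$, and the bound is trivial otherwise. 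This contributes one failure event per entry, uniformly in $t$.

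Altogether the failure probability is at most $nm(2H+1)e^{-\ell}$. Since $e^{-\ell}=1/(64nm(H+1)^4)$, this is at most $H^{-2}$.

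For (iii), new epochs start only when some count reaches $h2^k$ with $h2^k\le H$. Each entry can therefore trigger at most $1+\lfloor\log_2(H/h)\rfloor$ new epochs when $h\le H$, and none otherwise. Adding the initial epoch gives the stated bound.

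The main obstacle is the uniform-in-$t$ count bound in (ii). It needs the right supermartingale because $J_t$ is adaptive; conditioning on $J_t$ is valid since $I_t$ and $J_t$ are drawn independently given $\mathcal F_t^-$.
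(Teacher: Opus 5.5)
Your proof is correct and follows essentially the same approach as the paper. Part (i) uses the same entrywise domination $A\le A^{M_t}$, with each entry over-stated at most $h$ times at cost at most $2$. Part (ii) uses a per-entry, per-count Hoeffding bound together with Ville's inequality for an exponential supermartingale in $C_{ij}-Q_{ij}$. Part (iii) uses the same threshold counting. The differences are cosmetic: you apply Azuma--Hoeffding after optional skipping where the paper stops the Hoeffding supermartingale directly, and you use $\exp(Q_{ij}/2-C_{ij}\log 2)$ and then $C_{ij}\ge C_{ij}\log 2$, where the paper uses $\exp(Q_{ij}/2-C_{ij})$.
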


\begin{proof}
Let $\mathcal H_t$ contain the information $\mathcal F_t^-$ from the
feedback model and the realized opponent action $J_t$, before drawing
$I_t$. Conditional independence of the actions implies
\[
  \mathbb P(I_t=i\mid\mathcal H_t)=p_{t,i},
  \qquad
  \mathbb E[r_t\mid\mathcal H_t,I_t]=A_{I_tJ_t}.
\]

\paragraph{Completion cost.}
The algorithm rebuilds immediately after an entry reaches $h$
observations, so $M_t$ is the mask used at round $t$.
Since $A^{M_t}\ge A$ entrywise, we have $v(A^{M_t})\ge v(A)$ and
\[
  v(A)-p_t^\top A_{J_t}
  \le v(A^{M_t})-p_t^\top A^{M_t}_{J_t}
       +p_t^\top(A^{M_t}-A)_{J_t}.
\]
Conditioning on $\mathcal H_t$ and then summing gives
\[
\begin{aligned}
  \mathbb E\sum_{t=1}^s p_t^\top(A^{M_t}-A)_{J_t}
  &=\mathbb E\sum_{t=1}^s
       (1-A_{I_tJ_t})\mathbf1\{C_{I_tJ_t}(t-1)<h\}\\
  &\le2nmh.
\end{aligned}
\]
Each entry contributes at most $h$ observations, each with cost at most
two. The tower property also gives
$\mathbb E r_t=\mathbb E[p_t^\top A_{J_t}]$, proving (i).
No conditioning on $\mathcal G$ is used in this argument.

\paragraph{Payoff concentration.}
For a fixed entry, set
\[
  X_t=\mathbf1\{I_t=i,J_t=j\}(r_t-A_{ij}),\qquad
  Y_t=\mathbf1\{I_t=i,J_t=j\}.
\]
Conditional Hoeffding's lemma gives
\[
  \mathbb E\!\left[
    \exp\{\lambda X_t-\lambda^2Y_t/2\}
    \,\middle|\,\mathcal H_t,I_t\right]\le1.
\]
Thus $\exp(\lambda\sum_{u\le t}X_u-\lambda^2C_{ij}(t)/2)$
is a nonnegative supermartingale starting at one.
We stop it at the $c$th observation of this entry or at $H$,
whichever comes first. On the event that the $c$th observation occurs,
choosing $\lambda=\sqrt{2\ell/c}$ bounds the probability of
$\widehat A_{ij}(c)-A_{ij}>\sqrt{2\ell/c}$ by $e^{-\ell}$.
The negative choice of $\lambda$ gives the lower tail.
A union bound over entries and $c\in[H]$ costs at most $2nmH e^{-\ell}$.

\paragraph{Count concentration.}
For the same entry,
$\exp(Q_{ij}(t)/2-C_{ij}(t))$ is a nonnegative supermartingale.
Indeed, when $J_t=j$, its conditional expected multiplier is
\[
  e^{p_{t,i}/2}(1-p_{t,i}+p_{t,i}e^{-1})
  \le e^{p_{t,i}(1/2-(1-e^{-1}))}\le1;
\]
when $J_t\ne j$, the multiplier is one.
Ville's inequality and a union bound over entries imply
$Q_{ij}(t)/2-C_{ij}(t)\le\ell$ for all $i,j,t$ except on an event of
probability at most $nm e^{-\ell}$.
Together with the payoff bounds, this proves (ii), including at random
epoch boundaries.

\paragraph{Number of epochs.}
After the initial epoch, every reconstruction is triggered by an entry
count reaching $h,2h,4h,\ldots$. Each entry crosses each such threshold
at most once, and no count exceeds $H$. Counting these thresholds proves
(iii).
\end{proof}

\subsection{Identifying the optimal support and binding columns}
\label{app:geometry}

For a matrix $G$, let
$X(G)=\{x\in\Delta_n:G^\top x\ge v(G)\mathbf1\}$ be its optimal row set.
For a completion $A^M$, write $v_M=v(A^M)$, $X^M=X(A^M)$, and
\[
  I_*^M=\{i:\max_{x\in X^M}x_i>0\},\qquad
  J_*^M=\{j:x^\top A_j^M=v_M\text{ for every }x\in X^M\}.
\]
The next lemma shows that the empirical construction identifies these
sets and leaves room for the local strategy updates.

\begin{lemma}[Support identification and feasibility of local strategies]
\label{lem:local-geometry}
There exist $H_A<\infty$ and $\beta_A>0$ such that, on $\mathcal G$,
every epoch of a run with $H\ge H_A$ has the following properties.
\begin{enumerate}
\item[(i)] The selected sets satisfy $I=I_*^M$ and $J=J_*^M$.
For $B=A^M[I,:]$, we have $v(B)=v_M$ and an optimal row strategy with
all coordinates positive. The zero extension of $\bar p$ is within
$K_A\varepsilon$ of $X^M$ in $\ell_1$.
\item[(ii)] The selected columns form the greedy independent list for
the true projected differences, in the algorithm's count order.
For $b>0$, $\max_l\|\widehat R_{\cdot l}\|_1\le K_A$ and
\begin{equation}\label{eq:small-certificate}
  \chi:=2\sum_{l=1}^b e_l\|\widehat R_{\cdot l}\|_1
  \le K_A\varepsilon.
\end{equation}
\item[(iii)] For every $z\in[-1,1]^b$, the vector
$x(z)=\widehat x+4\widehat R E z$ satisfies
\[
  \mathbf1^\top x(z)=1,\qquad
  x_i(z)\ge\beta_A\ (i\in I),\qquad
  x(z)^\top B_j-v_M\ge\beta_A\ (j\notin J).
\]
For $b=0$, the same conclusions hold with $x(z)$ replaced by $\bar p$.
In particular, the simplex projection in \eqref{eq:local-update} is inactive.
\end{enumerate}
\end{lemma}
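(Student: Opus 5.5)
The plan is a perturbation argument around the finitely many true completed games $A^M$. Since there are finitely many masks, we may fix one $M$ and obtain constants uniform over masks. On $\mathcal G$, every acquired entry has $C_{ij}\ge h$, so $|\widehat A^M_{ij}-A^M_{ij}|\le\sqrt{2\ell/h}\le\varepsilon$, and artificial entries are exact. Hence $\|\widehat A^M-A^M\|_\infty\le\varepsilon$ and $|\widehat v-v_M|\le\varepsilon$.

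\textbf{Step 1: part (i), the log-barrier center.} I would compare the empirical problem \eqref{eq:center} with a true \emph{relative-interior} optimum. Let $x^\circ$ be the average of optimal strategies attaining each $\max_{x\in X^M}x_i$ and each $\max_{x\in X^M}(x^\top A^M_j-v_M)$. Then $x^\circ$ has $x^\circ_i\ge\gamma_A>0$ for $i\in I_*^M$ and slack at least $\gamma_A$ for $j\notin J_*^M$. It lies in $Z$, so the objective at $p^c$ is at least its value at $x^\circ$:
\[
(|I_*^M|+m-|J_*^M|)\log\gamma_A+(n-|I_*^M|+|J_*^M|)\log\varepsilon-O(1).
\]
Next I need the reverse structural fact: every $x\in Z$ is $K_A\varepsilon$-close to $X^M$. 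This follows from Hoffman's bound \citep{hoffman1952approximate} applied to the system $(A^M)^\top x\ge v_M\mathbf1$ on $\Delta_n$, perturbed by $3\varepsilon$. Consequently, for $i\notin I_*^M$ we get $x_i\le K_A\varepsilon$, and for $j\in J_*^M$ we get $s_j(x)\le K_A\varepsilon$. So at most $n-|I_*^M|+|J_*^M|$ terms of the objective can be of order $\log\varepsilon$, and each other term is at most $\log 3$.

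Now suppose some $i\in I_*^M$ had $p^c_i\le\tau=\sqrt\varepsilon$, or some $j\notin J_*^M$ had $s_j(p^c)\le\tau$. Then that term would contribute only $\tfrac12\log\varepsilon$. This loses $\tfrac12\log(1/\varepsilon)\to\infty$ against the lower bound, a contradiction once $H\ge H_A$. Conversely, the Hoffman closeness gives $p^c_i\le K_A\varepsilon<\tau$ for $i\notin I_*^M$ and $s_j\le K_A\varepsilon<\tau$ for $j\in J_*^M$. This proves $I=I_*^M$ and $J=J_*^M$.

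The remaining claims of (i) follow directly:
\begin{itemize}
\item $\bar p$ is $K_A\varepsilon$-close to $X^M$, since the discarded mass is at most $nK_A\varepsilon$.
\item $v(B)=v_M$, since optimal strategies are supported in $I$.
\item The restriction of $x^\circ$ to $I$ is a fully positive optimum of $B$.
\end{itemize}

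\textbf{Step 2: part (ii), the independence certificate.} The true differences $P_0D'$ for any ordered column list are one of finitely many matrices. I would argue both directions of the certificate test.
\begin{itemize}
\item \emph{A list independent in truth.} Its smallest singular value is $\ge\sigma_A>0$. Entrywise errors of the differences are at most $2e'_l\le 2\varepsilon$, so by perturbation of the pseudoinverse, $\|R'_{\cdot l}\|_1\le K_A$ and $\chi'\le K_A\varepsilon\le1/2$. The candidate is accepted.
\item \emph{A list dependent in truth, but full rank empirically.} Take a true null combination $P_0D'w=0$, normalized so that the newest coefficient is $1$, with $\|w\|\le K_A$ (finitely many cases). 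Since $(\widehat D')^\top R'=\mathrm{Id}$, pairing gives
\[
1=\bigl|w^\top(\widehat D'-D')^\top R'_{\cdot k}\bigr|.
\]
I would bound this by the error scales, using the count ordering: earlier columns have the largest $c$, hence the smallest $e$. This should give $1\le 2\sum_l e'_l\|R'_{\cdot l}\|_1$ up to constants, i.e.\ $\chi'$ is large, so the candidate is rejected.
\end{itemize}
Making this rejection direction exact, with threshold $1/2$ and no constant loss, is the step I expect to be the main obstacle. My first attempt would be to bound $|(\widehat D'-D')^\top R'|$ columnwise by $2e'_l\|R'_{\cdot l}\|_1$, and to use that the artificial, exact entries make $c_j=\infty$ columns contribute zero. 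If the constant does not close, I would instead exploit that only the ordering matters and that we need merely the greedy list for the truth. Given (ii), the bound $\max_l\|\widehat R_{\cdot l}\|_1\le K_A$ and \eqref{eq:small-certificate} follow from the first bullet.

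\textbf{Step 3: part (iii), feasibility of local strategies.} Start from the identity
\[
x(z)-\bar p=-\widehat R\widehat D^\top\bar p+4\widehat REz.
\]
The second term has $\ell_1$ norm at most $2\chi\le K_A\varepsilon$. For the first, $\bar p$ is within $K_A\varepsilon$ of an optimum $y$ on $I$, and $D^\top y=0$ because all columns in $J$ bind at every optimum. Together with the entry errors this gives $|\widehat D_l^\top\bar p|\le K_A\varepsilon$, hence $\|\widehat R\widehat D^\top\bar p\|_1\le K_A\varepsilon$.

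So $x(z)$ is within $K_A\varepsilon$ of the fully positive $\bar p$, whose coordinates are at least $\tau$, and whose slacks are at least roughly $\tau-K_A\varepsilon$ against $j\notin J$. Taking $\beta_A$ to be half the smaller of $\gamma_A$ and the true slack margins from Step 1, both inequalities hold for $H\ge H_A$. The normalization $\mathbf1^\top x(z)=1$ holds because $\mathbf1^\top\widehat R=0$. Since $x(z)\in\Delta_I$, the simplex projection is inactive; the case $b=0$ is the special case $x(z)=\bar p$.
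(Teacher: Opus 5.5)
Your Step~2 leaves the rejection direction open, and the estimate you sketch cannot close it. Normalizing the null vector by $w_k=1$ and reading off only the $k$th component gives $1\le 2\|R'_{\cdot k}\|_1\sum_l e'_l|w_l|\le 2e'_k\|R'_{\cdot k}\|_1\|w\|_1$. This yields only $\chi'\ge 1/\|w\|_1$, which is below the threshold $1/2$ whenever the true dependency has $\|w\|_1>2$; the count ordering does not remove this loss. The missing idea is to use every component at once, weighted by the error scales. Since $(\widehat D')^\top R'=\mathrm{Id}$, $\mathbf 1^\top R'=0$, and $D'w$ is a multiple of $\mathbf 1$, you have $w^\top=w^\top(\widehat D'-D')^\top R'$. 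Because $j_0$ has the largest count, $\|\widehat D'_l-D'_l\|_\infty\le 2e'_l$, so $|w_k|\le 2\|R'_{\cdot k}\|_1\sum_l e'_l|w_l|$ for every $k$. Multiplying by $e'_k$ and summing over $k$ gives $\sum_k e'_k|w_k|\le\chi'\sum_l e'_l|w_l|$, so $\chi'\ge1$ for every truly dependent trial. Here all $e'_l>0$: a trial containing an infinite-count column is already rank-deficient, because then $j_0$ also has infinite count and the difference vanishes. This is the paper's argument in matrix form. With $F'=(D')^\top R'-\mathrm{Id}$ one has $|F'_{lk}|\le 2e'_l\|R'_{\cdot k}\|_1$, hence $\|(E')^{-1}F'E'\|_\infty\le\chi'$. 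So $\chi'\le1/2$ makes $\mathrm{Id}+F'=(P_0D')^\top R'$ invertible, which forces $P_0D'$ to have full column rank. Only the maximality of the reference count is used here; the ordering of the other candidates matters later, in the visit bounds.

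There is a second gap in Step~3. Step~1 shows only that the positive features at $p^c$ exceed $\tau=\sqrt\varepsilon$, which tends to zero. Your $\gamma_A$ is a margin of $x^\circ$, not of $p^c$. Closeness of $\bar p$ to $X^M$ gives no margin either, since points of $X^M$ (e.g.\ its vertices) may vanish on rows of $I_*^M$. As written, you therefore get only $x_i(z)\ge\tau-K_A\varepsilon$ and nonbinding slack at least $\tau-K_A\varepsilon$. That is not the $H$-independent $\beta_A$ the lemma asserts and Lemma~\ref{lem:visit-counts} needs for $N_j\le 6c_j/\beta_A$. The paper gets a constant margin from the first-order condition of the log-barrier maximizer, $\sum_r(\varepsilon+f_r(w))/(\varepsilon+f_r(p^c))\le n+m$ for all $w\in Z$. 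This gives $f_r(p^c)\ge\gamma_A/(n+m)-\varepsilon$ for every feature that is positive somewhere on $X^M\subseteq Z$, so identification and the margin come out together. Your value comparison can also be repaired by replacing $\tau$ with a fixed threshold $c_0$. The $\log\varepsilon$ terms of the vanishing features still cancel up to $\mathcal O_A(1)$, a positive feature at most $c_0$ contributes at most $\log(c_0+\varepsilon)$, and the remaining terms differ by $\mathcal O_A(1)$. So some $c_0>0$ depending only on $A$ gives the contradiction. A minor correction: $Z$ lies in the $4\varepsilon$-relaxation of the true constraints, not the $3\varepsilon$ one. With these repairs, your objective-comparison route to (i) is a valid alternative to the paper's derivative argument.
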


\begin{proof}
We establish the distance estimate used to identify $I,J$, then analyze
the selected differences and the resulting strategies.

\paragraph{Distance to the optimal set.}
For each fixed $G\in[-1,1]^{n\times m}$, there is a finite $L_G$ such that
\begin{equation}\label{eq:polyhedral-distance}
  \inf_{y\in X(G)}\|x-y\|_1
  \le L_G\bigl(v(G)-\min_jx^\top G_j\bigr)_+
  \qquad(x\in\Delta_n).
\end{equation}
Here is a direct finite-polytope proof of this form of Hoffman's bound
\citep{hoffman1952approximate}.
Consider
\[
  \mathcal P_G=\{(x,a):x\in\Delta_n,\ 0\le a\le2,\
                         G^\top x+a\mathbf1\ge v(G)\mathbf1\}.
\]
Let $\eta_G>0$ be the smallest positive last coordinate among its
vertices. Such a coordinate exists because the face $a=2$ is nonempty.
In a vertex decomposition of $(x,a)\in\mathcal P_G$, the total weight of
vertices with positive last coordinate is at most $a/\eta_G$.
Replacing their row coordinates by any fixed point of $X(G)$ gives
a point of $X(G)$ within $2a/\eta_G$ of $x$ in $\ell_1$.
Taking $a=(v(G)-\min_jx^\top G_j)_+$ proves
\eqref{eq:polyhedral-distance}.

On $\mathcal G$, each acquired entry has error at most
$\sqrt{2\ell/h}\le\varepsilon$, and each artificial entry agrees
exactly with $A^M$. The value is Lipschitz in the largest entrywise error,
so
\[
  |\widehat v-v_M|\le\varepsilon,\qquad
  X^M\subseteq Z\subseteq
  \{x\in\Delta_n:(A^M)^\top x\ge(v_M-4\varepsilon)\mathbf1\}.
\]
Equation~\eqref{eq:polyhedral-distance} now places every point in $Z$
within $K_A\varepsilon$ of $X^M$.

\paragraph{Identification by the joint center.}
We use a property of the logarithmic objective. For nonnegative affine
functions $f_1,\ldots,f_a$ on a compact polytope $P$, a maximizer $x^c$
of $\sum_r\log(\varepsilon+f_r(x))$ satisfies
\begin{equation}\label{eq:center-feature}
  \sum_{r=1}^a\frac{\varepsilon+f_r(w)}
                      {\varepsilon+f_r(x^c)}\le a
  \quad(w\in P),\qquad
  \varepsilon+f_r(x^c)\ge
  \frac{\varepsilon+\max_{w\in P}f_r(w)}a.
\end{equation}
The first inequality follows from the nonpositive directional derivative
at $x^c$ toward $w$. Keeping one positive summand and maximizing its
numerator gives the second inequality.

We apply \eqref{eq:center-feature} to the $n+m$ features in
\eqref{eq:center}. For every $x\in\Delta_n$,
\[
  0\le s_j(x)-(x^\top A_j^M-v_M)\le4\varepsilon.
\]
If a row probability or true column slack vanishes throughout $X^M$,
its empirical feature is at most $K_A\varepsilon$ on $Z$.
Every other feature has a strictly positive maximum on $X^M$;
since $X^M\subseteq Z$, \eqref{eq:center-feature} bounds its value at
$p^c$ below by a positive constant for sufficiently small $\varepsilon$.
Because $\varepsilon/\tau\to0$ and $\tau\to0$ as $H\to\infty$,
the tests in \eqref{eq:face} identify $I_*^M$ and $J_*^M$.

Averaging finitely many optimal strategies that witness the positive
features yields an optimum with positive probabilities on all rows
in $I_*^M$ and positive slack at every column outside $J_*^M$.
In particular $J_*^M\ne\varnothing$, since otherwise this optimum
would beat $v_M$ against every column.
The center assigns only $\mathcal O_A(\varepsilon)$ total mass to
rows outside $I_*^M$. Removing mass $a$ and renormalizing changes a
distribution by $2a$ in $\ell_1$, so $\bar p$ remains within
$K_A\varepsilon$ of $X^M$ and keeps positive row probabilities and
nonbinding slacks bounded away from zero.
Restricting to $I_*^M$ preserves the value and admits an optimum with
all coordinates positive. This proves (i).

\paragraph{Recovery of independent differences.}
For the test in \eqref{eq:certificate}, let $D'$ be the true counterpart
of the trial matrix $\widehat D'$.
If a trial includes a column with $c_j=\infty$, its entries on $I$
are all artificial. The reference also has infinite count, so this
candidate's difference is zero and the trial is rejected.
We may therefore restrict attention to positive trial error scales
$e'_l$, with diagonal matrix $E'$.

When $G'=P_0\widehat D'$ has full rank, let
$F'=(D')^\top R'-\mathrm{Id}$.
The reference has at least as many effective observations as each
candidate, so
\[
  |F'_{lk}|\le2e'_l\|R'_{\cdot k}\|_1,\qquad
  \|(E')^{-1}F'E'\|_\infty\le\chi'.
\]
The matrix infinity norm here is the maximum absolute row sum.
If the trial is accepted, $\chi'\le1/2$, hence
$\mathrm{Id}+F'$ is invertible.
Since the columns of $R'$ have zero sum,
$(D')^\top R'=(P_0D')^\top R'$; invertibility implies that the
true projected differences are independent.

Conversely, there are only finitely many true trial matrices and column
orders. Their independent projected matrices have a positive minimum
nonzero singular value. For sufficiently small $\varepsilon$, each
such empirical trial remains independent, its map $R'$ is bounded,
and its certificate is $\mathcal O_A(\varepsilon)\le1/2$.
Induction through the ordered candidates thus recovers the true greedy
list. The same boundedness gives \eqref{eq:small-certificate}, proving (ii).

\paragraph{Feasibility of local strategies.}
Every selected true difference has zero payoff at every optimum.
Part (i) and the entry confidence bounds consequently give
$\|\widehat D^\top\bar p\|_\infty\le K_A\varepsilon$.
Part (ii) implies
\[
  \|\widehat x-\bar p\|_1\le K_A\varepsilon,\qquad
  \sup_{z\in[-1,1]^b}\|4\widehat R E z\|_1\le K_A\varepsilon.
\]
Also $\mathbf1^\top\widehat R=0$, so every $x(z)$ has total mass one.
The positive probabilities and nonbinding slacks of $\bar p$ therefore
remain bounded below by a common $\beta_A>0$ throughout this set.
The same properties already hold for $\bar p$ when $b=0$.
This proves (iii). Finiteness of the family of completions allows
one choice of $H_A,\beta_A$ for all epochs.
\end{proof}

\subsection{Confidence bounds for the true coefficients}\label{app:coefficients}

We next identify the coefficients estimated by
$\widehat\alpha_j=\widehat R^\top\widehat B_j$ and show why shrinking
them to the nearest point to zero suppresses updates caused only by
estimation error.

\begin{lemma}[Coefficient accuracy and zero preservation]
\label{lem:coefficient-bounds}
On $\mathcal G$, for $H\ge H_A$ and every epoch with $b>0$, write
$B=A^M[I,:]$ and $D=[D_1,\ldots,D_b]$, where
$D_l=B_{j_l}-B_{j_0}$.
Each binding column has unique coefficients $\alpha_j$ satisfying
\begin{equation}\label{eq:true-column-model}
  B_j=v(B)\mathbf1+D\alpha_j\qquad(j\in J).
\end{equation}
These coefficients are uniformly bounded by $K_A$.
With $\delta_j=\sqrt{2\ell/c_j}$, taking $\delta_j=0$ if $c_j=\infty$,
the confidence intervals satisfy
$|\widehat\alpha_{lj}-\alpha_{lj}|\le b_{lj}$.
The corrected coefficients have the true sign whenever nonzero and obey
\begin{equation}\label{eq:zero-preservation}
  \alpha_{lj}=0\ \Longrightarrow\ \widetilde\alpha_{lj}=0,
  \qquad |\widetilde\alpha_{lj}|\le|\alpha_{lj}|,
\end{equation}
as well as
\begin{equation}\label{eq:gradient-error}
  |\widetilde\alpha_{lj}-\alpha_{lj}|
  \le K_A\left(\delta_j+\sum_k e_k|\alpha_{kj}|\right).
\end{equation}
If $b=0$, every binding column is $v(B)\mathbf1$.
\end{lemma}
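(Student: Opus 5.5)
We will work on $\mathcal G$ with $H\ge H_A$. This lets us use Lemma~\ref{lem:local-geometry}: $I=I_*^M$, $J=J_*^M$, $v(B)=v_M$, and there is an optimum $p^\star$ of $B$ with all coordinates positive. The proof has four steps: existence and uniqueness of the representation, uniform boundedness, the error bound, and finally the shrinkage properties, which follow directly from the interval bound.

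\emph{Representation.} Since $p^\star>0$ on $I$ and each $j\in J$ is binding at every optimum, $B_j-v_M\mathbf1$ is orthogonal to every direction $u$ with $\mathbf1^\top u=0$ and $p^\star+\eta u\in X(B)$ for small $\eta$. Let $W$ be the span of the projected binding differences $P_0(B_j-B_{j_0})$, $j\in J$. The optimal face of $B$ near $p^\star$ is $\{x:\mathbf1^\top x=1,\ (B_j-B_{j_0})^\top x=0\ \forall j\in J\}$; nonbinding columns keep positive slack locally. Hence $B_j-v_M\mathbf1$ has the form $c\mathbf1+w$ with $w\in W$. Evaluating at $p^\star$, where $w^\top p^\star=0$ because every binding difference vanishes at optima, gives $c=0$. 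By Lemma~\ref{lem:local-geometry}(ii) the selected columns form a greedy independent list, so $P_0D$ is a basis of $W$. Since $D_l^\top p^\star=0$, we can write $B_j-v_M\mathbf1=D\alpha_j$. Uniqueness follows from the full column rank of $P_0D$: apply $P_0$ and note $P_0\mathbf1=0$. Boundedness holds because $\alpha_j=((P_0D)^\top P_0D)^{-1}(P_0D)^\top B_j$, and there are only finitely many $B,D$. When $b=0$, $W=\{0\}$, so $B_j=v_M\mathbf1$.

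\emph{Interval bound.} Write $\widehat B_j=B_j+\Delta_j$ with $\|\Delta_j\|_\infty\le\delta_j$. This uses the definition of $c_j$ and the fact that artificial entries are exact. Similarly write $\widehat D_k=D_k+\Gamma_k$ with $\|\Gamma_k\|_\infty\le2e_k$, using that the reference has count at least $c_{j_k}$. Then
\[
  \widehat B_j=v_M\mathbf1+\widehat D\alpha_j-\Gamma\alpha_j+\Delta_j .
\]
Apply $\widehat R^\top$, using $\widehat R^\top\mathbf1=0$ and $\widehat R^\top\widehat D=\widehat D^\top P_0\widehat D(\cdots)^{-1}=\mathrm{Id}$. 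This gives
\[
  \widehat\alpha_j-\alpha_j=\widehat R^\top(\Delta_j-\Gamma\alpha_j),
\]
so
\[
  |\widehat\alpha_{lj}-\alpha_{lj}|\le\nu_l\Bigl(\delta_j+2\sum_ke_k|\alpha_{kj}|\Bigr).
\]
The main obstacle is that $b_{lj}$ uses $|\widehat\alpha_{kj}|$ rather than $|\alpha_{kj}|$; this is also where the factor $2$ must be recovered. Let $a=|\widehat\alpha_j-\alpha_j|$ as a vector. Then $a\le\nu(\delta_j+2e^\top|\widehat\alpha_j|+2e^\top a)$. Summing against $2e_l$ and using $\chi=2\sum_le_l\nu_l\le K_A\varepsilon\le1/2$ from \eqref{eq:small-certificate}, we get $2e^\top a\le \chi(\delta_j+2e^\top|\widehat\alpha_j|)/(1-\chi)\le\delta_j+2e^\top|\widehat\alpha_j|$. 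Plugging back in gives $a_l\le2\nu_l(\delta_j+2e^\top|\widehat\alpha_j|)=b_{lj}$. The same absorption argument in the reverse direction shows $b_{lj}\le K_A(\delta_j+\sum_ke_k|\alpha_{kj}|)$, since $e^\top|\widehat\alpha_j|\le e^\top|\alpha_j|+e^\top a$.

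\emph{Shrinkage.} By construction, $\widetilde\alpha_{lj}$ is the point of $[\widehat\alpha_{lj}-b_{lj},\widehat\alpha_{lj}+b_{lj}]$ nearest zero. This interval contains $\alpha_{lj}$. If $\alpha_{lj}=0$, the interval contains $0$, so $\widetilde\alpha_{lj}=0$. If the interval excludes $0$, every point in it, including $\alpha_{lj}$, has the sign of $\widehat\alpha_{lj}$, and $\widetilde\alpha_{lj}$ is the endpoint closest to zero. Therefore $|\widetilde\alpha_{lj}|\le|\alpha_{lj}|$ with the same sign. Finally, $|\widetilde\alpha_{lj}-\alpha_{lj}|\le2b_{lj}$, and combining this with the bound on $b_{lj}$ above gives \eqref{eq:gradient-error}.
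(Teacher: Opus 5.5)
\paragraph{Verdict.} Your interval bound and shrinkage steps are correct and are essentially the paper's argument, but the representation step has a gap at its key point, plus a smaller slip.

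\paragraph{What matches the paper.} Your identity $\widehat\alpha_j-\alpha_j=\widehat R^\top(\Delta_j-\Gamma\alpha_j)$ is the paper's $F^\top\alpha_j+\widehat R^\top(\widehat B_j-B_j)$, because $F^\top=\widehat R^\top D-\mathrm{Id}_b=-\widehat R^\top\Gamma$. Absorbing via $2e^\top a$ instead of via $Q_j$ gives the same bound $\nu_l(\delta_j+2\widehat Q_j)/(1-\chi)$.

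\paragraph{The gap.} The representation step rests entirely on your claim that near $p^\star$ the optimal face equals $\{x:\mathbf 1^\top x=1,\ (B_j-B_{j_0})^\top x=0\ \forall j\in J\}$. The reason you give, positive nonbinding slack, only shows that the face is $\{x:\mathbf 1^\top x=1,\ x^\top B_j=v_M\ \forall j\in J\}$. Orthogonality to the directions of that face yields only the trivial $B_j-v_M\mathbf 1\in\operatorname{span}(\mathbf 1,B_k:k\in J)$.

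Replacing ``all equal to $v_M$'' by ``all equal to each other'' requires $x^\top B_{j_0}$ to be constant on the affine set cut out by the differences, i.e.\ $B_{j_0}\in\operatorname{span}(\mathbf 1)+W$. That is the existence claim itself for $j=j_0$, and it is exactly where the optimality of $v(B)$ must enter; your argument never uses it.

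You can supply it by maximality. Choose $p^\star$ with all coordinates positive and positive slack at every nonbinding column, by averaging optima as in the proof of Lemma~\ref{lem:local-geometry}. If $x^\top B_{j_0}$ were nonconstant on that affine set, a small move from $p^\star$ within it would stay in $\Delta_I$, keep the nonbinding slacks positive, and raise every binding payoff above $v_M$. That contradicts the definition of the value.

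The paper instead takes a dual optimum $y$. Complementary slackness gives $\operatorname{supp}(y)\subseteq J$, and positivity of $p^\star$ forces $By=v(B)\mathbf 1$. Writing $B_j-B_{j_0}=D\gamma_j$ (the constant part vanishes at an optimum) then gives $v(B)\mathbf 1=B_{j_0}+D\sum_{j\in J}y_j\gamma_j$.

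\paragraph{The slip.} $W$ is spanned by projected differences, and $p^{\star\top}P_0(B_k-B_{j_0})=-\mathbf 1^\top(B_k-B_{j_0})/d$ need not vanish. So the claims $w^\top p^\star=0$ and $c=0$ are false in general. Indeed $B_j-v_M\mathbf 1=D\alpha_j$ typically has nonzero coordinate sum, so it does not lie in $W\subseteq\mathbf 1^\perp$.

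The correct step uses $\operatorname{span}(\mathbf 1)+W=\operatorname{span}(\mathbf 1)+\operatorname{range}(D)$ to write $B_j-v_M\mathbf 1=c\mathbf 1+D\alpha_j$. Evaluating at $p^\star$ and using $D^\top p^\star=0$ then gives $c=0$. Your last sentence suggests this is what you intended.

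\paragraph{The rest.} With these two repairs, the remainder of your argument goes through. This covers uniqueness and the explicit formula for $\alpha_j$, which gives uniform bounds over the finite family. It also covers the $b=0$ case and the reverse bound $b_{lj}\le K_A(\delta_j+\sum_k e_k|\alpha_{kj}|)$, which additionally uses $\nu_l\le K_A$ from Lemma~\ref{lem:local-geometry}(ii).
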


\begin{proof}
We first establish the exact representation, using
Lemma~\ref{lem:local-geometry}.
For every $j\in J$, the projected difference
$P_0(B_j-B_{j_0})$ is in the span of $P_0D$.
Thus $B_j-B_{j_0}=D\gamma_j+a_j\mathbf1$ for some $\gamma_j,a_j$.
Evaluating at a true optimal row strategy gives $a_j=0$, because
all binding columns have value $v(B)$ and every column of $D$ has
zero payoff at that strategy.

For a dual optimum $y$, complementary slackness implies
$\operatorname{supp}(y)\subseteq J$.
Moreover, $By\le v(B)\mathbf1$, and an optimal row strategy with all
coordinates positive forces equality in every row.
Consequently
\[
  v(B)\mathbf1=By
  =B_{j_0}+D\sum_{j\in J}y_j\gamma_j,
  \qquad
  B_j=v(B)\mathbf1+
       D\left(\gamma_j-\sum_{k\in J}y_k\gamma_k\right).
\]
This proves existence in \eqref{eq:true-column-model}.
Independence of $P_0D$ gives uniqueness.
The true matrices and selected lists form a finite family, so their
coefficients have a common finite bound.
For $b=0$, the same argument gives $B_j=v(B)\mathbf1$.

For $b>0$, set $F=D^\top\widehat R-\mathrm{Id}_b$ and
$\nu_l=\|\widehat R_{\cdot l}\|_1$. The confidence bounds give
\begin{equation}\label{eq:coordinate-error}
  \|D_l-\widehat D_l\|_\infty\le2e_l,\qquad
  |F_{lk}|\le2e_l\nu_k\le K_Ae_l.
\end{equation}
Using \eqref{eq:true-column-model} and
$\mathbf1^\top\widehat R=0$, we obtain the exact identity
\[
  \widehat\alpha_j-\alpha_j
  =F^\top\alpha_j+\widehat R^\top(\widehat B_j-B_j).
\]
Let $Q_j=\sum_k e_k|\alpha_{kj}|$ and
$\widehat Q_j=\sum_k e_k|\widehat\alpha_{kj}|$.
It follows that
\[
  |\widehat\alpha_{lj}-\alpha_{lj}|
       \le\nu_l(\delta_j+2Q_j),\qquad
  Q_j\le\widehat Q_j+\frac\chi2\delta_j+\chi Q_j.
\]
Since $\chi\le1/2$, these inequalities imply
\[
  |\widehat\alpha_{lj}-\alpha_{lj}|
  \le\frac{\nu_l(\delta_j+2\widehat Q_j)}{1-\chi}
  \le2\nu_l(\delta_j+2\widehat Q_j)=b_{lj}.
\]
Thus the interval used by the algorithm contains $\alpha_{lj}$.
The shrinkage in \eqref{eq:conservative} chooses its nearest point to
zero, which lies between zero and $\alpha_{lj}$,
which proves \eqref{eq:zero-preservation} and the sign assertion.
Its distance from $\alpha_{lj}$ is at most $2b_{lj}$.
The reverse triangle inequality also gives
\[
  \widehat Q_j\le(1+\chi)Q_j+\frac\chi2\delta_j.
\]
Substituting this into $2b_{lj}$ and using $\nu_l\le K_A$ proves
\eqref{eq:gradient-error}.
If $c_j=\infty$, then $B_j=\widehat B_j=\mathbf1$;
as this column is binding, $v(B)=1$ and both coefficient vectors are zero.
\end{proof}

\subsection{Column visits and accumulated estimation error}
\label{app:visits}

We now fix an epoch and a prefix $\mathcal T$ of its rounds.
Let $N_j=\sum_{t\in\mathcal T}\mathbf1\{J_t=j\}$.
For $b>0$, define the active columns and their visit counts by
\[
  S_l=\{j\in J:\alpha_{lj}\ne0\},\qquad
  N_l=\sum_{j\in S_l}N_j,\qquad
  q_l=c_{j_l},\qquad d_l=\sum_{j\in J}N_j\alpha_{lj}.
\]
The two quantities used in the regret argument are
\[
  V=\sum_l e_l^2N_l,\qquad U=\sum_l e_l|d_l|.
\]
Here $V$ measures the accumulated estimation cost, while $U$ measures
the payoff advantage available from a persistent imbalance in the
opponent's columns. We set $U=V=0$ for $b=0$.
For visit bounds, extend $\delta_j=\sqrt{2\ell/c_j}$ to all columns,
again taking zero for infinite counts.

\begin{lemma}[Column visit bounds and accumulated estimation error]\label{lem:visit-counts}
On $\mathcal G$, for $H\ge H_A$, every epoch prefix satisfies
\begin{equation}\label{eq:epoch-visits}
  N_j\le\frac{6c_j}{\beta_A}\quad(c_j<\infty),\qquad
  \sum_j\delta_j^2N_j\le\frac{12m\ell}{\beta_A},
\end{equation}
where $\beta_A$ is from Lemma~\ref{lem:local-geometry}.
For $b>0$, the active visit counts obey
$N_l\le K_A(q_l+|d_l|)$.
Consequently, there is a finite $a_A$ depending only on $A$ such that
\begin{equation}\label{eq:active-count}
  V\le a_A\ell+a_A\varepsilon U.
\end{equation}
\end{lemma}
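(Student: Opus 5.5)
We establish the three claims in sequence: the per-column visit bound, then the active-count bound, then the bound \eqref{eq:active-count} on $V$.

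\textbf{Per-column visits.} Fix a column $j$ with $c_j<\infty$, and let $i^\star\in I$ attain $c_j=\widetilde C_{i^\star j}=C_{i^\star j}$, so that $(i^\star,j)\in M$. By Lemma~\ref{lem:local-geometry}(iii), every strategy played in the epoch gives row $i^\star$ probability at least $\beta_A$. Hence $Q_{i^\star j}$ grows by at least $\beta_A N_j$ over the prefix. On $\mathcal G$, the count bound of Lemma~\ref{lem:completion-sampling}(ii) gives $C_{i^\star j}(t)\ge \frac12Q_{i^\star j}(t)-\ell$. More precisely, we apply the supermartingale argument to the increments from the epoch start, or bound the total $Q$ and use $C\ge h\ge\ell$. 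This shows that the count of $(i^\star,j)$ increases by at least $\frac12\beta_A N_j-\ell$ during the prefix.

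The epoch ends once this count reaches the next threshold. This threshold is at most $2c_j$, since $c_j\ge h$ and thresholds double. So the increment is at most $c_j$, and we get $\beta_AN_j\le 2c_j+2\ell\le 6c_j$ because $\ell\le h\le c_j$.

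We must be careful about one detail. Lemma~\ref{lem:completion-sampling}(ii) is stated cumulatively rather than per increment. We handle this using $C(t)\ge\frac12 Q(t)-\ell$ at the end, with $C\le 2c_j$ and $Q\ge\beta_AN_j$, which gives $N_j\le (4c_j+2\ell)/\beta_A\le 6c_j/\beta_A$. This version is simpler and avoids the increment issue. Summing, $\delta_j^2N_j=(2\ell/c_j)N_j\le 12\ell/\beta_A$ for each column, which gives the second bound.

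\textbf{Active counts.} This step is the main obstacle. We split $S_l$ into high-count columns with $c_j>q_l$ and low-count columns with $c_j\le q_l$. Low-count columns contribute at most $\sum 6c_j/\beta_A\le K_Aq_l$ by the first bound.

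For a high-count column $j\in J$ with $j\ne j_0$, $j$ was scanned before $j_l$ in the greedy order. If $j$ is selected, say $j=j_k$ with $k<l$, then $\alpha_{lj}=0$ by uniqueness. If $j$ is rejected, the true greedy list from Lemma~\ref{lem:local-geometry}(ii) implies that $P_0(B_j-B_{j_0})$ lies in the span of the earlier selected differences. The same representation argument as in Lemma~\ref{lem:coefficient-bounds} then shows that $B_j-v\mathbf1$ equals $D(\alpha_j)$ with $\alpha_{lj}=\alpha_{lj_0}$ whenever $\alpha_j$ is expressed through $j_0$'s coefficient. Here I must check the exact statement: the claimed structure is that all columns with $c_j>q_l$ (including $j_0$) share the coefficient $\alpha_{lj_0}$. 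I would verify it by writing $B_j=B_{j_0}+\sum_{k<l}\gamma_kD_k$, so that $\alpha_{lj}=\alpha_{lj_0}+[k=l\text{-term}]=\alpha_{lj_0}$.

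So the high-count columns have a common value $a=\alpha_{lj_0}$. If $a=0$, they are not active. If $a\ne0$, let $N_{\rm hi}$ be their total visits. Then $d_l=aN_{\rm hi}+\sum_{\rm low}N_j\alpha_{lj}$, which gives $|a|N_{\rm hi}\le|d_l|+K_Aq_l$. Since $|a|$ is bounded below by a positive instance constant (the coefficients come from a finite family), $N_{\rm hi}\le K_A(|d_l|+q_l)$.

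\textbf{Bound on $V$.} Finally, $e_l^2q_l=2\ell$, so
\[
V\le K_A\sum_l\bigl(2\ell+e_l^2|d_l|\bigr)\le K_A\ell+K_A\max_le_l\,U,
\]
and $e_l\le\varepsilon$ yields \eqref{eq:active-count}.
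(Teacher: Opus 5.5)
Your proposal is correct and follows the paper's proof essentially step for step: the cumulative count bound $\tfrac12\beta_AN_j\le\tfrac12Q_{ij}(u)\le C_{ij}(u)+\ell\le 2c_j+\ell$, the split of $J$ at $c_j=q_l$ with every high-count column sharing the coefficient $\alpha_{lj_0}$ and the positive lower bound on nonzero coefficients, and finally $e_l^2q_l=2\ell$ with $e_l\le\varepsilon$. One slip to fix: for a selected column $j=j_k$ with $k<l$, uniqueness gives $\alpha_{j_k}=\alpha_{j_0}+(\text{the $k$th unit vector})$, so $\alpha_{lj_k}=\alpha_{lj_0}$ rather than $0$. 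This is exactly what your later verification via $B_j=B_{j_0}+\sum_{k<l}\gamma_kD_k$ gives, and it is what the rest of your argument uses.
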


The bound \eqref{eq:active-count} is stronger than a direct bound by
the epoch length: many active visits must either exhaust the available
samples or create a large $U$, which the next lemma uses to reduce regret.

\begin{proof}
We use the count thresholds to bound column visits and the order of the
selected differences to control the active counts.

\paragraph{Visits before reconstruction.}
For a column with finite $c_j$, choose an acquired row $i\in I$
attaining its minimum count at the start of the epoch.
The next threshold for this entry is at most $2c_j$.
Throughout the epoch, including its final round, its count is therefore
at most $2c_j$. If $u$ is the last round of the prefix,
Lemmas~\ref{lem:completion-sampling} and~\ref{lem:local-geometry} give
\[
  \frac{\beta_A N_j}{2}
  \le\frac{Q_{ij}(u)}2
  \le C_{ij}(u)+\ell
  \le2c_j+\ell.
\]
Although $Q_{ij}(u)$ includes observations before this epoch, these
only increase it. Since $c_j\ge h\ge\ell$, we obtain
$N_j\le6c_j/\beta_A$.
Multiplying by $\delta_j^2=2\ell/c_j$ and summing over finite-count
columns proves \eqref{eq:epoch-visits}; infinite-count columns contribute
zero.

\paragraph{Active visits and signed coefficients.}
Fix a coordinate $l$. If $j\in J$ has $c_j>q_l$, it was processed
before $j_l$, or is the reference column.
By Lemma~\ref{lem:local-geometry}, its projected difference is spanned
by the previously accepted differences.
The constant-vector remainder again vanishes at a true optimum.
Uniqueness in \eqref{eq:true-column-model} therefore gives
\[
  \alpha_{lj}=\alpha_{l j_0}\qquad(c_j>q_l,\ j\in J).
\]
Let $N_{\rm low}=\sum_{j\in J:c_j\le q_l}N_j$.
Equation~\eqref{eq:epoch-visits} yields $N_{\rm low}\le K_Aq_l$.
If $\alpha_{l j_0}=0$, only these columns can be active, so
$N_l\le K_Aq_l$.

Otherwise, all columns with $c_j>q_l$ share the same nonzero coefficient.
Writing $N_{\rm high}=\sum_{j\in J:c_j>q_l}N_j$, we have
\[
  d_l=\alpha_{l j_0}N_{\rm high}
        +\sum_{j\in J:c_j\le q_l}N_j\alpha_{lj}.
\]
The coefficients are bounded, and their nonzero values have a positive
minimum over the finite family in Lemma~\ref{lem:coefficient-bounds}.
It follows that
\[
  N_{\rm high}\le K_A(|d_l|+N_{\rm low}),\qquad
  N_l\le N_{\rm high}+N_{\rm low}\le K_A(|d_l|+q_l).
\]
Finally, $e_l^2q_l=2\ell$ and $e_l\le\varepsilon$ imply
\[
  V\le K_A\sum_l e_l^2q_l+K_A\sum_l e_l^2|d_l|
    \le a_A\ell+a_A\varepsilon U
\]
for a suitable $a_A$, proving \eqref{eq:active-count}.
For $b=0$, the last inequality holds with $U=V=0$.
\end{proof}

\subsection{Regret within an epoch}\label{app:local-regret}

The next lemma isolates the contribution that offsets $V$.
For an epoch prefix $\mathcal T$, define its completed-game deficit by
\[
  \mathcal R_{\rm epoch}
  =\sum_{t\in\mathcal T}\bigl(v(B)-p_t^\top B_{J_t}\bigr),
  \qquad B=A^M[I,:].
\]
In expressions involving $B$, we identify $p_t$ with its restriction
to $I$.

\begin{lemma}[Epoch regret with a payoff advantage]\label{lem:epoch-regret}
There are finite constants $b_A,H_A$ depending only on $A$ such that,
on $\mathcal G$, every epoch prefix of a run with $H\ge H_A$ satisfies
\begin{equation}\label{eq:epoch-regret}
  \mathcal R_{\rm epoch}\le b_A\ell+b_AV-U,
\end{equation}
where $U,V$ are the quantities in Lemma~\ref{lem:visit-counts}.
If $b=0$, then $\mathcal R_{\rm epoch}\le0$.
\end{lemma}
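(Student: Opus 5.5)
We first dispose of the case $b=0$. By Lemma~\ref{lem:coefficient-bounds}, every binding column equals $v(B)\mathbf1$. Lemma~\ref{lem:local-geometry}(iii) gives $\bar p^\top B_j\ge v_M+\beta_A$ for $j\notin J$, and Lemma~\ref{lem:local-geometry}(i) gives $v(B)=v_M$. Hence each round contributes $v(B)-\bar p^\top B_{J_t}\le0$.

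For $b>0$ the plan is to write everything in the $z$ coordinates. By Lemma~\ref{lem:local-geometry}(iii) the simplex projection is inactive, so $p_t=x(z_t)$. Since $\mathbf1^\top\widehat R=0$ and $\mathbf1^\top x(z)=1$, equation \eqref{eq:true-column-model} gives, for $j\in J$,
\[
  x(z)^\top B_j=v(B)+\alpha_j^\top D^\top x(z).
\]
Here $D^\top x(z)=(\mathrm{Id}+F)\,4Ez+D^\top\widehat x$, where $F=D^\top\widehat R-\mathrm{Id}$ is controlled by \eqref{eq:coordinate-error}. Define $w(z)=D^\top x(z)$, which is affine in $z$. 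The per-round deficit for a binding column is then $-\alpha_{J_t}^\top w(z_t)$, and nonbinding rounds contribute at most $-\beta_A\le0$. Choose $z^*$ with $w(z^*)=0$, so that $x(z^*)$ is a true optimum. Invertibility of $\mathrm{Id}+F$, with $\|E^{-1}FE\|_\infty\le\chi\le K_A\varepsilon$, together with $|\widehat D_l^\top p^\star|\le 2e_l$, gives $\|z^*\|_\infty\le 1/2+K_A\varepsilon$.

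The deficit becomes
\[
  \mathcal R_{\rm epoch}\le-\sum_{t:J_t\in J}\alpha_{J_t}^\top\bigl(w(z_t)-w(z^*)\bigr)=-\sum_t\alpha_{J_t}^\top(\mathrm{Id}+F)4E(z_t-z^*).
\]
The update uses the gradient $4E\widetilde\alpha_{J_t}$ rather than $4E(\mathrm{Id}+F)^\top\alpha_{J_t}$. To absorb the mismatch we follow the sketch and introduce a moving comparator $u(t)$. It starts at $z^*$, and its coordinate $l$ is shifted only on rounds where $J_t\in S_l$, so that
\[
  \sum_t\bigl\langle 4E\widetilde\alpha_{J_t},\,u(t)\bigr\rangle
  \quad\text{matches}\quad
  \sum_t\alpha_{J_t}^\top(\mathrm{Id}+F)4Eu
\]
up to terms of size $\sum_l e_l^2N_l$. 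By \eqref{eq:gradient-error} and \eqref{eq:zero-preservation}, the coefficient error on coordinate $l$ vanishes unless $J_t\in S_l$, and is $K_A(\delta_j+\sum_ke_k|\alpha_{kj}|)$ otherwise. Summing these errors, with $\sum_j\delta_j^2N_j\le K_A\ell$ from \eqref{eq:epoch-visits} and Cauchy--Schwarz, bounds both the variation $\sum_t\|u(t+1)-u(t)\|$ and the accumulated mismatch by $K_A(\ell+V)$. We also need $\|u(t)\|_\infty\le3/4$. This should follow because the drift per coordinate is $\mathcal O(\varepsilon)$ relative to the range, using $K_A(\ell+V)e_l\le$ small, but this is the step I expect to be the main obstacle: cumulative drift must stay below $1/4$ even though $V$ can be large. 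If the direct bound fails, I would fall back on a restart/clipping argument: clip $u$ to $[-3/4,3/4]$ and charge the clipped rounds to $V$ via \eqref{eq:active-count}.

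Finally, we apply the standard projected online gradient ascent bound with moving comparator (Zinkevich) on $[-1,1]^b$ with step $1$ and gradients $g_t=4E\widetilde\alpha_{J_t}$:
\[
  \sum_t\langle g_t,\,u'(t)-z_t\rangle\le K_A\Bigl(b+\sum_t\|g_t\|^2+\sum_t\|u'(t+1)-u'(t)\|_1\Bigr).
\]
Here $\sum_t\|g_t\|^2\le K_A\sum_l e_l^2N_l=K_AV$, using $\widetilde\alpha_{lj}=0$ off $S_l$. Take the comparator $u'_l(t)=u_l(t)+\tfrac14\operatorname{sgn}(d_l)$, which lies in $[-1,1]$. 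The constant shift gives $\sum_t\langle g_t,\tfrac14\operatorname{sgn}(d)\rangle=\sum_l e_l\operatorname{sgn}(d_l)\sum_t\widetilde\alpha_{lJ_t}$. Replacing $\widetilde\alpha$ by $\alpha$ costs $K_AV+K_A\ell$ by \eqref{eq:gradient-error}, and what remains is $\sum_l e_l|d_l|=U$. Combining the three estimates yields $\mathcal R_{\rm epoch}\le b_A\ell+b_AV-U$.
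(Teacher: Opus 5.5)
You handle $b=0$ correctly. Your rewriting of the binding-round deficit as $\sum_t\alpha_{J_t}^\top(\mathrm{Id}+F)4E(z^*-z_t)$ and your final $\frac14\operatorname{sgn}(d_l)$ shift are also fine. There is, however, a genuine gap at the step you flag: you never produce a comparator that is both feasible and of small variation.

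You treat $u(t)$ as a cumulative drift that absorbs coefficient errors. Nothing keeps such a drift inside $[-\frac34,\frac34]$; your quantity $K_A(\ell+V)e_l$ is of order $\sqrt\ell$, not small. The clipping fallback breaks the matching identity on clipped rounds, and you give no argument bounding their cost.

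You are also conflating two different errors. The mismatch between $\widetilde\alpha_{J_t}$ and $(\mathrm{Id}+F)^\top\alpha_{J_t}$ contains the term $F^\top\alpha_{J_t}$. Its $l$th coordinate, $\sum_kF_{kl}\alpha_{kJ_t}$, does not vanish off $S_l$. Paired with $z^*-z_t=\mathcal O(1)$, this term is first order, and its direct bound $K_A\chi\sum_ke_kN_k$ is not $\mathcal O_A(\ell+V)$. For example, with $N_k\approx q_k$ and $\chi\approx\varepsilon$ it is of order $\sqrt{q_k}$.

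The missing idea is to place the $F$-term, and only it, in an \emph{iterate-dependent} comparator:
\[
u(t)=z^*+E^{-1}FE\,(z^*-z_t),\qquad\text{i.e.}\qquad u_l(t)=z_l^*+\sum_kF_{lk}\frac{e_k}{e_l}(z_k^*-z_{k,t}).
\]
This comparator gives three things at once.
\begin{itemize}
\item \eqref{eq:payoff-comparison} holds as an exact identity.
\item $\|u(t)\|_\infty\le\|z^*\|_\infty+\chi\|z^*-z_t\|_\infty\le\frac34$ for $H$ large, with no accumulation at all.
\item Since $|F_{lk}|e_k/e_l\le2\nu_ke_k$, and $z_k$ moves by $\mathcal O_A(e_k)$ only on visits to $S_k$ by \eqref{eq:zero-preservation}, you get $\operatorname{TV}(u_l)\le K_A\sum_ke_k^2N_k=K_AV$. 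So the $F$-mismatch costs only second order, through the moving-comparator term.
\end{itemize}

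The coefficient error $\widetilde\alpha-\alpha$ should not go into the comparator at all. Pay it directly against $|c_t-z_{l,t}|\le2$. It occurs on coordinate $l$ only on visits to $S_l$. It is bounded by $K_A(\ell+V)$ using \eqref{eq:gradient-error}, $2e_l\delta_j\le e_l^2+\delta_j^2$, \eqref{eq:epoch-visits}, and $e_le_kN_{lk}\le\frac12(e_l^2N_l+e_k^2N_k)$.

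With this comparator, shifted by $\frac14\operatorname{sgn}(d_l)$, your projected-gradient step and your treatment of the shift term go through as written.
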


\begin{proof}
When $b=0$, Lemma~\ref{lem:coefficient-bounds} makes every binding
column equal to $v(B)\mathbf1$, while Lemma~\ref{lem:local-geometry}
gives positive surplus against every other column.
Each round's deficit is therefore nonpositive.
We henceforth suppose $b>0$.

\paragraph{An optimal comparison strategy.}
Let $F=D^\top\widehat R-\mathrm{Id}_b$ as in the proof of
Lemma~\ref{lem:coefficient-bounds}, and define
\[
  p^*=\widehat x-\widehat R(\mathrm{Id}_b+F)^{-1}D^\top\widehat x.
\]
By \eqref{eq:coordinate-error}, $F=\mathcal O_A(\varepsilon)$,
so the inverse is bounded for sufficiently large $H$.
Since $\widehat D^\top\widehat x=0$,
$D^\top\widehat x=\mathcal O_A(\varepsilon)$.
The correction from $\widehat x$ to $p^*$ thus has size
$\mathcal O_A(\varepsilon)$ and total mass zero.
Lemma~\ref{lem:local-geometry} implies that $p^*$ remains a probability
distribution with positive surplus at every nonbinding column.
Also $D^\top p^*=0$, so \eqref{eq:true-column-model} gives
$(p^*)^\top B_j=v(B)$ for every $j\in J$.
Consequently $p^*$ is a true optimum.

Because $p^*-\widehat x$ is in the range of $\widehat R$, we can write
\[
  p^*=\widehat x+4\widehat R E z^*,\qquad
  z_l^*=\frac{\widehat D_l^\top p^*}{4e_l}
       =\frac{(\widehat D_l-D_l)^\top p^*}{4e_l}.
\]
Equation~\eqref{eq:coordinate-error} gives $|z_l^*|\le1/2$.
For the actual iterate $z_t$, define
\[
  u_l(t)=z_l^*+\sum_kF_{lk}\frac{e_k}{e_l}(z_k^*-z_{k,t}).
\]
This adjustment accounts for using empirical rather than true
payoff differences. In fact,
\begin{equation}\label{eq:payoff-comparison}
  (p^*-p_t)^\top B_{J_t}
  =4\sum_l e_l\alpha_{lJ_t}(u_l(t)-z_{l,t})
  \qquad(J_t\in J).
\end{equation}
To verify the identity, substitute
$p^*-p_t=4\widehat R E(z^*-z_t)$ and
$D^\top\widehat R=\mathrm{Id}_b+F$ into
\eqref{eq:true-column-model}.

\paragraph{Feasibility and variation of the comparator.}
The matrix $K=E^{-1}FE$ satisfies
$|K_{lk}|\le2e_k\nu_k$, so $\|K\|_\infty\le\chi$.
By \eqref{eq:small-certificate}, we can choose $H_A$ large enough that
$\chi\le1/6$. Hence
\[
  \|u(t)\|_\infty
  \le\|z^*\|_\infty+\chi\|z^*-z_t\|_\infty
  \le\frac12+\frac32\chi\le\frac34.
\]
Let $\operatorname{TV}$ denote the sum of absolute changes between
consecutive rounds of the prefix. Lemma~\ref{lem:coefficient-bounds}
bounds all stored coefficients and, by \eqref{eq:zero-preservation},
coordinate $k$ changes only when $J_t\in S_k$.
Nonexpansiveness of clipping gives
$\operatorname{TV}(z_k)\le K_Ae_kN_k$, and therefore
\begin{equation}\label{eq:comparator-variation}
  \operatorname{TV}(u_l)
  \le K_A\sum_k e_k\operatorname{TV}(z_k)
  \le K_A\sum_k e_k^2N_k=K_AV.
\end{equation}

\paragraph{Projected gradient comparison.}
For scalar updates $z_{a+1}=\Pi_{[-1,1]}(z_a+g_a)$ and
comparators $c_a\in[-1,1]$, nonexpansiveness gives
\[
  (c_a-z_a)g_a
  \le\frac12\bigl((z_a-c_a)^2-(z_{a+1}-c_a)^2\bigr)
       +\frac12g_a^2.
\]
Changing the comparator by an amount $d$ changes a squared distance
by at most $4|d|$. Summing the inequalities thus yields
\begin{equation}\label{eq:moving-gradient}
  \sum_a(c_a-z_a)g_a
  \le2+2\operatorname{TV}(c)+\frac12\sum_a g_a^2.
\end{equation}
This is the scalar moving-comparator argument of
\citet{zinkevich2003online}.

For each $l$, apply \eqref{eq:moving-gradient} on the subsequence of
visits to $S_l$, using
\[
  g_t=4e_l\widetilde\alpha_{lJ_t},\qquad
  c_t=u_l(t)+\frac14\operatorname{sgn}(d_l).
\]
The comparator belongs to $[-1,1]$ because $|u_l(t)|\le3/4$.
Between these visits the coordinate does not change.
Subsequence variation is at most the full variation in
\eqref{eq:comparator-variation}, and
$\sum_t g_t^2\le K_Ae_l^2N_l$.
Summing over coordinates, with empty subsequences contributing zero,
bounds the total estimated comparison by $K_A(1+V)$.

\paragraph{Replacing estimated coefficients by true coefficients.}
Since $|c_t-z_{l,t}|\le2$, \eqref{eq:gradient-error} bounds the
replacement error by a constant times
\[
  \sum_{l,t\in\mathcal T:J_t\in S_l}
       e_l\left(\delta_{J_t}
                   +\sum_k e_k|\alpha_{kJ_t}|\right).
\]
For the first term, $2e_l\delta_j\le e_l^2+\delta_j^2$ gives
\[
  \sum_{l,t\in\mathcal T:J_t\in S_l}e_l\delta_{J_t}
  \le K_A\left(V+\sum_j\delta_j^2N_j\right)
  \le K_A(V+\ell),
\]
where the last inequality is \eqref{eq:epoch-visits}.
For the second term, let $N_{lk}$ count visits to $S_l\cap S_k$.
The coefficients are bounded and vanish outside $S_k$, and
\[
  e_le_kN_{lk}
  \le\frac12(e_l^2N_l+e_k^2N_k).
\]
Summing over $l,k$ therefore bounds this term by $K_AV$.
Combining these estimates with the projected gradient bound gives
\[
  \sum_{t\in\mathcal T:J_t\in J}
       4\sum_l e_l\alpha_{lJ_t}(u_l(t)-z_{l,t})
       +\sum_l e_l|d_l|
  \le K_A\ell+K_AV.
\]
Here the second term arises from the comparator shift:
$4e_l\cdot\frac14\operatorname{sgn}(d_l)
\sum_{t\in\mathcal T:J_t\in S_l}\alpha_{lJ_t}=e_l|d_l|$.
By \eqref{eq:payoff-comparison}, the first term is exactly the
completed-game deficit on binding rounds.
Nonbinding rounds have nonpositive deficit by
Lemma~\ref{lem:local-geometry}. This proves \eqref{eq:epoch-regret}
with a suitable $b_A$.
\end{proof}

\subsection{Proof of the main theorem}\label{app:global}

\begin{proof}[Proof of Theorem~\ref{thm:regret}]
We combine the two epoch inequalities, sum over epochs, and then apply
the restart schedule.

\paragraph{An epoch contributes at most $\mathcal O_A(\ell)$.}
Work on $\mathcal G$ in a run with $H\ge H_A$.
Lemmas~\ref{lem:visit-counts} and~\ref{lem:epoch-regret} give
\[
\begin{aligned}
  \mathcal R_{\rm epoch}
  &\le b_A\ell+b_AV-U\\
  &\le b_A(1+a_A)\ell+(a_Ab_A\varepsilon-1)U.
\end{aligned}
\]
Since $\varepsilon=\ell^{-1/2}\to0$, we can enlarge $H_A$ so that
$a_Ab_A\varepsilon\le1$. As $U\ge0$, every epoch prefix then has
$\mathcal R_{\rm epoch}\le K_A\ell$.
This also includes $b=0$, where the deficit is nonpositive.

\paragraph{A run prefix contributes at most $\mathcal O_A(\log^2(eH))$.}
Lemma~\ref{lem:local-geometry} gives $v(B)=v(A^M)$ in every epoch,
so the epoch deficits sum to $\mathcal D_s$ from
Lemma~\ref{lem:completion-sampling}.
The epoch count in that lemma yields
$\mathcal D_s\le K_A\ell(1+\log H)$ on $\mathcal G$.
On every observation sequence, $\mathcal D_s\le2s$ because completed
payoffs lie in $[-1,1]$ and the played strategies are distributions.
By \eqref{eq:completion-comparison} and the failure probability in
Lemma~\ref{lem:completion-sampling}, we obtain
\begin{equation}\label{eq:run-prefix}
\begin{aligned}
  s\,v(A)-\mathbb E\sum_{t=1}^s r_t
  &\le K_A\ell(1+\log H)+2s\,\mathbb P(\mathcal G^c)+2nmh\\
  &\le C(A)\log^2(eH).
\end{aligned}
\end{equation}
We used $\ell=\mathcal O_{n,m}(\log(eH))$,
$h=\mathcal O(\ell^2)$, and $s\le H$.
For $H<H_A$, the bound
$s\,v(A)-\mathbb E\sum_{t=1}^s r_t\le2H$ is covered by increasing
the same finite constant $C(A)$.
Thus \eqref{eq:run-prefix} holds for every planned length and every
prefix, uniformly over permitted opponents and reward processes.

\paragraph{Combining the runs.}
All preceding bounds hold conditionally on any history at the start
of a fresh run. If global time $T$ falls in run $r$, we apply
\eqref{eq:run-prefix} to each completed run and the current prefix.
Taking expectations gives
\[
  R_T\le C(A)\sum_{k=0}^r\log^2(eH_k)
       \le K_A\log^2(eH_r),\qquad H_k=2^{2^k}.
\]
The last inequality follows from
$\log(eH_k)=1+2^k\log2$ and the geometric sum $\sum_{k=0}^r4^k$.
For $r\ge1$, at least $H_{r-1}$ rounds precede run $r$, so
\[
  \log H_r=2\log H_{r-1}\le2\log T.
\]
The first run has length $H_0=2$ and contributes only a constant.
Renaming the constant proves
$R_T\le C(A)\log^2(eT)$ for every integer $T\ge1$.
\end{proof}

\end{document}